
\documentclass[letterpaper, 10 pt, conference]{ieeeconf}  


\IEEEoverridecommandlockouts                              


\usepackage{cite}
\usepackage{color}
\usepackage{latexsym}
\usepackage{amsmath}
\usepackage{amssymb}
\usepackage{algorithm}
\usepackage{algorithmic}
\usepackage{graphics} 
\usepackage{epsfig} 
\usepackage{booktabs} 
\usepackage{times}
\usepackage{float}
\usepackage{algorithm}
\usepackage{makecell}
\usepackage{multirow}
\usepackage{hhline}
\usepackage{algorithmic}
\usepackage{mathrsfs}
\usepackage{wrapfig}



\newtheorem{myTheo}{Theorem}

\newtheorem{lemma}{Lemma}

\newtheorem{mypro}[myTheo]{Proposition}

\newtheorem{assumption}{Assumption}

\title{\LARGE \bf Taming Convergence for Asynchronous Stochastic Gradient Descent with Unbounded Delay in Non-Convex Learning
}


\author{Xin Zhang, Jia Liu and Zhengyuan Zhu
\thanks{
Xin Zhang is with the Department of Statistics and Department of Computer Science, Iowa State University,
        Ames, IA 50011, USA
        {\tt\small xinzhang@iastate.edu}}
\thanks{
Zhengyuan Zhu is with the Department of Statistics, Iowa State University,
        Ames, IA 50011, USA
        {\tt\small zhuz@iastate.edu}}
\thanks{Jia Liu is with the Department of  Computer Science and Department of Electrical and Computer Engineering, Iowa State University, Ames, IA 50011, USA
        {\tt\small jialiu@iastate.edu}}%
}

\begin{document}


\maketitle
\thispagestyle{empty}
\pagestyle{empty}

\begin{abstract}
Understanding the convergence performance of asynchronous stochastic gradient descent method (Async-SGD) has received increasing attention in recent years due to their foundational role in machine learning.
To date, however, most of the existing works are restricted to either  bounded gradient delays or convex settings.
In this paper, we focus on Async-SGD and its variant Async-SGDI (which uses increasing batch size) for non-convex optimization problems with unbounded gradient delays. 
We prove $o(1/\sqrt{k})$ convergence rate for Async-SGD and $o(1/k)$ for Async-SGDI.
Also, a unifying sufficient assumption for Async-SGD's convergence is proposed, which includes two major gradient  delay models in the literature as special cases. 
\end{abstract}

\section{Introduction} \label{sec:intro}

Fueled by large-scale machine learning and data analytics, recent years have witnessed an ever-increasing need for computing power.
However, with the miniaturization of transistors nearing the limit at atomic scale, it is projected that the celebrated Moore's law (the doubling growth rate  of CPU speed in every 18 months) will end in around 2025 \cite{MooresLawEnd}.
Consequently, to sustain the rapid growth for machine learning technologies in the post-Moore's-Law era, the only viable solution is to exploit {\em parallelism} at and across different spatial scales.
Indeed, the recent success of machine learning research and applications is due in a large part to the advances in multi-core CPU/GPU technologies (on the micro chip level) and networked cloud computing (on the macro data center level), which enable the developments of highly parallel and distributed algorithmic architectures.
Such examples include parallel SVM \cite{zhang2005parallel}, scalable matrix factorization \cite{yu2012scalable,yu2014parallel,xu2013parallel}, distributed deep learning \cite{dean2012large,povey2014parallel,abadi2016tensorflow,li2014scaling}, to name just a few.


However, developing efficient and effective parallel algorithms is highly non-trivial. 
In the literature, most parallel machine learning algorithms are synchronous in nature, i.e., a set of processors performing certain computational tasks in a distributed fashion under a common clock.
Although synchronous parallel algorithms are relatively simpler to design and analyze theoretically, their implementations in practice are usually problematic:
First, in many computing systems, maintaining clock synchronization is expensive and incurs high complexity and system overhead.
Second, synchronous parallel algorithms do not work well under heterogenous computing environments since all processors must wait for the slowest processor to finish in each iteration. 
Exacerbating the problem is the fact that, in many machine learning applications, it is often difficult to decompose a problem into subproblems with similar difficulty. This introduces yet another layer of heterogeneity in CPU/GPU processing time.
Third, synchronous operations in parallel algorithms often induce periodic spikes in information exchanges and congestions in the systems, which further cause high communication latency and even information losses. 
Due to these limitations, it is not only desirable but also necessary to consider {\em asynchronous parallel algorithmic designs} in practice.


In an asynchronous parallel algorithm, rather than making updates simultaneously, each node computes its own solution in each iteration without waiting for other nodes in the system.
Compared with their synchronous counterparts, asynchronous parallel algorithms are more resilient to heterogeneous computing environments and cause less network congestions and delay.
As a result, asynchronous parallel algorithmic designs are more attractive in practice for solving large-scale machine learning problems.
However, one of the most critical issues of asynchronous parallel  algorithms is that the use of stale system state information is unavoidable due to the asynchronous updates.
If not treated carefully, such delayed system information could destroy the convergence performance of their synchronous versions. 
This problem is particularly concerning for the {\em asynchronous stochastic gradient descent method} (Async-SGD), which is the fundamental building block of many distributed machine learning frameworks in use today (e.g., TensorFlow, MXNet, Caffe, etc.).
Hence, understanding the convergence performance of Async-SGD is important and has received increasing attention in recent years (see, e.g., \cite{recht2011hogwild,lian2015asynchronous,zheng2017asynchronous,huo2017asynchronous} etc.).
To date, however, most of the existing work in this area are restricted to the bounded gradient delay setting, whereas results on {\em unbounded} gradient delay remain quite limited (see Section~\ref{sec:related} for more detailed discussions). 
Moreover, all existing convergence results with unbounded delay in the literature require convexity assumptions, which are irrelevant to the inherently non-convex nature of many challenging machine learning problems.
In light of these limitations, our goal is in this paper is to fill this gap and achieve a deeper understanding of the convergence performance of Async-SGD in non-convex learning.

Toward this end, we consider using Async-SGD to solve a non-convex optimization problem in the form of:
\begin{equation}\label{ob.prob}
\min_{x\in \mathbb{R}^d} f(x)=\mathbb{E}[F(x;\xi)],
\end{equation}
where $\xi$ is an i.i.d. random sample drawn from the database, and $f(x)$ is a smooth non-convex function. 
The objective in (\ref{ob.prob}) could be infinite-sum, which means the sample size in database is large.
We note that Problem (\ref{ob.prob}) is general enough to represent a wide range of machine learning problems in practice.
Further, we do {\em not} assume any bounded delay of the outdated stochastic gradients during the execution of Async-SGD.
As will be shown later, the unbounded assumption significantly complicates the convergence analysis of Async-SGD.
Our main technical results and key contributions in this paper are summarized as follows:

\vspace{-.01in}
\begin{list}{\labelitemi}{\leftmargin=1em \itemindent=-0.09em \itemsep=.2em}
\item First, we show that by choosing step-sizes at the speed $O(1/(\sqrt{k}\log(k)))$, $\mathbb{E}\{ \| \nabla f(x_k) \|_{2} \}$ converges to zero with rate $o(1/\sqrt{k}),$
which is much stronger compared to the $O(1/\sqrt{k})$ convergence rate in existing works of this area (see, e.g. \cite{lian2015asynchronous} and references therein). 
This is a surprising result because, to our knowledge, most existing work in the literature only yields Big-O bounds (e.g., $O(1/\sqrt{k})$).
In other words, our result shows that unbounded gradient delay in Async-SGD actually makes {\em no} difference in terms of convergence rate (in order sense) compared to the synchronous version.
This finding generalizes the existing results in the literature.

\item Second, by leveraging a supermartingale convergence theorem, we propose a generalized and more relaxed sufficient assumption on the probability distribution of the gradient-updating delay that guarantees convergence.
Our sufficient assumption offers a {\em unifying} framework that includes  two major gradient update delay models often assumed in the literature as special cases, namely: 1) bounded delays, and 2) unbounded i.i.d. delay (see more detailed discussions in Section~\ref{sec:related}).
Further, our sufficient assumption includes the delay distributions across iterations which could be non-i.i.d, unbounded, and even {\em heavy-tailed} (e.g., log-normal, Weibull, etc.). 

\item 
Inspired by the trade-off between ``rate'' and ``variance'',
we consider a variant of Async-SGD with increasing batch size (Async-SGDI).
We show that, if the batch size grows at rate $\omega(k)$, Async-SGDI achieves an $o(1/k)$ convergence rate result under a {\em fixed} step-size.
In other words, as long as the batch size grows slightly faster than linear, a  small constant step-size is sufficient to achieve an even faster Small-O convergence rate.
Therefore, there is no need to be concerned with the use of vanishing step-size strategies, which could be problematic because of numerical instability in practice.
\end{list}

The rest of the paper is organized as follows. Related work is discussed in Section 2. We will present the system model of Async-SGD in Section 3. In Section 4, the convergence rate of Async-SGD with unbounded delay is derived. The conclusion is given in Section 5. Due to limited space, experiment results and proofs are shown in Supplementary.

\section{Related work} \label{sec:related}
To put our work in comparison perspectives, in this section, we first provide a quick overview on stochastic gradient descent method (SGD).
We then focus on the recent advancements of Async-SGD.

{\bf 1) SGD and variance reduction:}
The SGD algorithm traces its root to the seminal work by \cite{robbins1951stochastic} and \cite{kiefer1952stochastic}, and has become a key component for solving many large-scale optimization problems.
Due to its foundational importance, the convergence rates of SGD and its variants have been actively researched over the years.
It is well known that the convergence rate of SGD is $O(1/\sqrt{k})$ for convex problems (see, e.g., \cite{nemirovski2009robust}) and $O(1/k)$ for strongly convex problems (see, e.g.,  \cite{moulines2011non}). 
To improve the convergence speed of SGD, stochastic variance reduction  methods have been proposed. 
For example, the stochastic averaged gradient (SAG) method proposed in \cite{schmidt2017minimizing} converges at $O(1/k)$ speed for convex problems and converges linearly for strongly convex problems. 
The stochastic variance reduced gradient (SVRG) method proposed in \cite{johnson2013accelerating} also enjoys similar sublinear convergence rate  for convex problems and linear convergence rate for strongly convex problems.

{\bf 2) SGD for non-convex problems:}
Due to the inherent non-convex nature in training deep neural networks, the convergence performance of SGD for non-convex optimization problems has also become a focal research area recently.
For example, \cite{ghadimi2013stochastic} proved that the ergodic convergence rate for nonconvex objection function with $\sigma-$bounded gradient is $O(1/\sqrt{k})$. 
Later, \cite{reddi2016stochastic} extended SVRG to non-convex problem and proved that it has a sublinear convergence. 
We note that this convergence rate result is consistent with that of the convex case.

{\bf 3) Asynchronous SGD for convex problems:}
As mentioned in Section~\ref{sec:intro}, Async-SGD has become increasingly popular recently due to its simplicity in implementation and practical relevance in many machine learning frameworks.
One of the earliest studies on Async-SGD is the algorithm termed HOGWILD! in \cite{recht2011hogwild}. 
HOGWILD! is a lock-free asynchronous parallel implementation of SGD on the shared memory system with sublinear convergence rate for strongly convex smooth problems. 
At roughly the same time, \cite{agarwal2011distributed} studied the convergence performance of SGD-based optimization algorithms on distributed stochastic convex problems with asynchronous and yet delayed gradients. 
Interestingly, asymptotic convergence rate $O(1/\sqrt{k})$ is shown in their work, which is consistent with that of the non-delayed case. 
However, compactness of feasible domain and bounded gradient are assumed in this work.
In \cite{reddi2015variance}, asynchronous stochastic variance reduction (Async-SVR) methods were analyzed for convex objectives and bounded delay. 
Note that all aforementioned Async-SGD methods assumed {\em bounded} gradient delay. 
One of the first investigations on unbounded delay is due to \cite{hannah2016unbounded}, where the convergence rate of ARock, an asynchronous coordinate decent method for solving convex optimization problems, is considered. 
It was shown that ARock converges weakly to a solution with probability one if the unbounded delayed gradients are independent and identically distributed (i.i.d.). 

\begin{table*}[t!]
\caption{Convergence comparisons for existing asynchronous methods ($\rho \in (0,1)$ is a constant; "Sum" means whether the total size of sample is finite or not).}
\begin{center}
\begin{tabular}{ c |c|c| c | c |c }

\hline
Work & Method & Sum & Convexity &Delay & Rate \\

\hline
\hline
\cite{hannah2016unbounded} & ARock & - & convex  & unbounded & - \\
\hline
\cite{sra2015adadelay} & Adadelay & infinite-sum & convex & unbounded & $O(1/k)$\\
\hline
\multirow{3}{*}{\cite{sun2017asynchronous} }& \multirow{3}{*}{Async-BCD} & \multirow{3}{*}{-} & strongly convex & \multirow{3}{*}{bounded} & $O(\rho^k)$ \\
\hhline{~~~-~-} &&&convex & & $o(1/k)$\\
\hhline{~~~-~-} &&&nonconvex &  & $o(1/\sqrt{k})$\\
\hline
\cite{lian2015asynchronous}&Async-SGD& finite-sum&nonconvex & bounded& $O(1/\sqrt{k})$ \\
\hline
\cite{huo2017asynchronous}&Async-SVRG& finite-sum&nonconvex& bounded&$O(1/k)$ \\
\hline
\multirow{2}{*}{Our work} &Async-SGD & \multirow{2}{*}{infinite-sum} & nonconvex  &unbounded & $o(1/\sqrt{k})$ \\ 
 \hhline{~-~---}        & Async-SGDI & &nonconvex &unbounded&$o(1/k)$ \\ 
\hline
\end{tabular}
\label{tab:comp}
\end{center}
\vspace{-.1in}
\end{table*} 

\smallskip
{\bf 4) Asynchronous SGD for non-convex problems:}
Similar to their synchronous counterparts, Async-SGD for nonconvex optimization problems also starts to attract some attentions lately. 
For example, Ref \cite{lian2015asynchronous} studied the convergence rate of Async-SGD for non-convex optimization problems with bounded delay, where they showed an $O(1/\sqrt{k})$ sublinear convergence rate. 
However, the best convergence rate they provided is highly dependent on the step-size selection strategy, which in turn depends on some {\em a priori} iteration threshold value $K$. 
Most recently in \cite{huo2017asynchronous}, an asynchronous mini-batch SVRG with bounded delay is proposed for solving non-convex optimization problems. 
They proved that the proposed method converges with an $O(1/k)$ convergence rate for non-convex optimization.

To conclude this section, we summarize the convergence performance guarantees in the prior literature and our results in Table~\ref{tab:comp} for clearer comparisons.

\section{System model and the asynchronous stochastic gradient descent algorithms} \label{sec:model}

\begin{figure}
\centering
\includegraphics[width=0.6\linewidth]{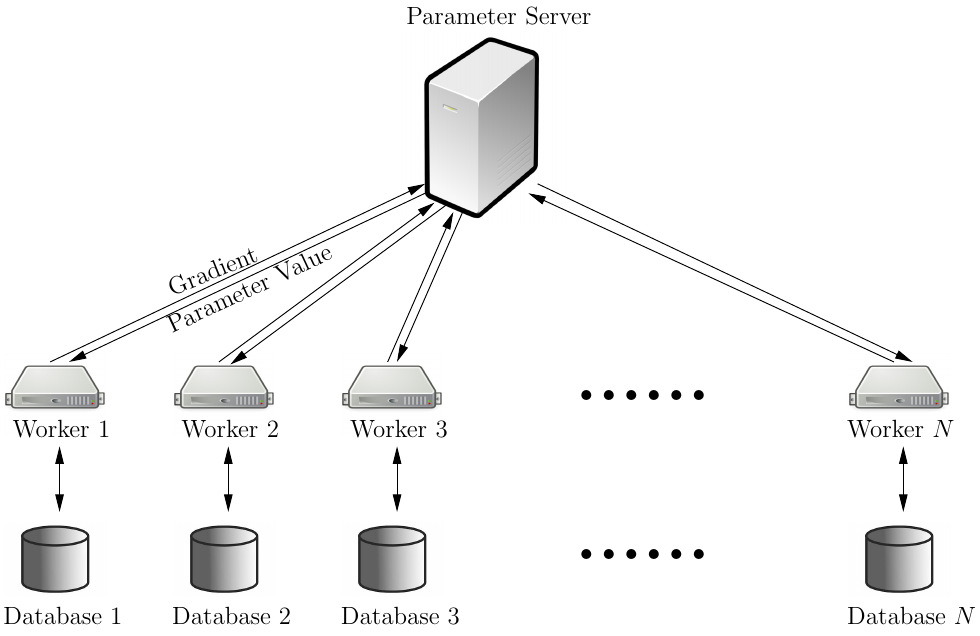}
\vspace{-.1in}
\caption{A parallel computing architecture for asynchronous gradient descent (Async-SGD).}
\label{fig:SysArch}
\vspace{-.1in}
\end{figure}

In this section, we first describe the system model for Asynchronous parallel algorithms. 
Then, we present the standard Async-SGD algorithm and a variant of Async-SGD with increasing batch size, which is named Asyn-SGDI.

Consider solving the optimization problem in (1) in a parallel computing architecture consisting of a parameter server and $N$ workers ($N$ is oftenly a fixed number), as shown in Figure~\ref{fig:SysArch}.
In practice, each worker could be a GPU (on a chip-scale) or a standalone server (on a datacenter-scale).
Under the Async-SGD algorithm, each worker independently retrieves the current values $x_{k}$ from the parameter server and randomly select a mini-batch of data samples from database and compute the stochastic gradient. 
Once the computation is finished, each worker immediately reports the computed stochastic gradient to the parameter server without waiting for other workers, and then start next computing cycle. 
On the other hand, upon collecting $M$ gradients from workers, the parameter server updates its current parameter with these stochastic gradients. 
However, due to asynchronicity, the server could use stale gradient information to update the parameters, which will affect the convergence.
We present Async-SGD in Algorithm~1.

\medskip
\hrule 
\vspace{0.01in}
\noindent {\bf Algorithm~1: Asynchronous SGD (Async-SGD).}
\vspace{.02in}
\hrule
\noindent {\bf At the parameter server:}
\begin{enumerate}
\item[1.] In the $i$-th update, wait till collecting $M$ stochastic gradients $G(x_{i-\tau_{i,m}};\xi_{i,m})$ from the workers. 
\item[2.] Update: $x_{i+1}=x_i-\gamma_i\sum_{m=1}^{M}G(x_{i-\tau_{i,m}};\xi_{i,m})$. 
\end{enumerate}

\noindent {\bf At each worker:}
\begin{enumerate}
\item[1.] Retrieve the current value of parameter $x$ from the parameter server. 
\item[2.] Randomly select a sample $\xi$ from the database. 
\item[3.] Compute stochastic gradient $G(x;\xi)$ and report it to server.
\end{enumerate}
\smallskip
\hrule

\medskip

%

\medskip
\hrule 
\vspace{.01in}
\noindent {\bf Algorithm~2: Async-SGD with increasing batch.} 
\vspace{.02in}
\hrule
\noindent {\bf At the parameter server:}
\begin{enumerate}
\item[1.] In the $i$-th update, wait till collecting $n_iM$ stochastic gradients $G(x_{i-\tau_{i,m}};\xi_{i,m})$ from workers. 
\item[2.] Update:$x_{i+1}=x_i - \frac{\gamma_i}{n_{i}} \sum_{m=1}^{n_iM}G(x_{i-\tau_{i,m}};\xi_{i,m})$. 
\end{enumerate}

\noindent {\bf At each worker:}
\begin{enumerate} 
\item[1.] Retrieve the current value of parameter $x$ from the parameter server. 
\item[2.] Randomly select a sample $\xi$ from the database. 
\item[3.] Compute stochastic gradient $G(x;\xi)$ and report it to server.
\end{enumerate}
\smallskip
\hrule

\medskip

In Algorithm~1, $G(x;\xi)$ denotes a stochastic gradient of $f(x)$ that is dependent on a random sample $\xi$; 
$\tau_{i,m}$ represents the delay for the $m$-th gradient in the mini-batch in $i$-th update seen by the parameter server. 
As shown in Algorithm~1, the parameter server updates the parameters regardless of the freshness of the collected gradients.

Further, 
we consider a modified scheme for Async-SGD with the same system. 
Instead of a fixed number of gradients, the server collects an increasing number of gradients as the number of iterations increases to help reduce the variance of stochastic gradients. 
We outline this scheme in Algorithm~2. 
Apparently, compared to the basic Async-SGD, the only difference between the two algorithms is that the batch size $n_{i}M$ at the parameter server is increasing, where $\{n_{i}\}_{i=1}^{\infty}$ is an integer-valued increasing series.

\section{Convergence analysis} \label{sec:main_thms}

In this section, we will conduct convergence analysis for the two Async-SGD algorithms described in Section 3. 
Similar to previous work on optimization for non-convex learning problems (see, e.g., \cite{cartis2010complexity,gratton2008recursive}), we use the expected $\ell_2$ norm of the gradient, i.e., $\mathbb{E} \{ \|\nabla f(x)\|^2 \}$, as the convergence metric. 
For non-convex optimization problems, we show that Async-SGD converges to a stationary point with asymptotic convergence rate $o(1/\sqrt{k})$. 
For Async-SGDI, the asymptotic convergence rate is even faster at $o(1/k)$. 

\subsection{Assumptions}

We first state the following assumptions for our analysis. 
The first three are commonly assumed in the literature for analyzing the convergence of SGD. 
The fourth assumption is a sufficient condition for the characteristics of gradient delays under which the convergence of Async-SGD is guaranteed.

\begin{assumption}[Lower bounded objective function] \label{assump_bnded_obj}
For the objective function $f$, there exists an optimal solution $x^*$, such that $\forall x \ne x^*$, we have $f(x)\ge f(x^*).$
\end{assumption}

\begin{assumption}[Lipschitz continuous gradient] \label{assump_Lipschitz_grad}
There exists a constant $L>0$ such that the objective function $f(\cdot)$ satisfies $\|\nabla f(x)-\nabla f(y)\|\le L\|x-y\|, ~\forall x,y \in \mathbb{R}^{d}.$
\end{assumption}

\begin{assumption}[Unbiased gradients with bounded variance]\label{assump_grad_exp_var}  The stochastic gradient $G(x;\xi)$ satisfies: $\mathbb{E}(G(x;\xi))=\nabla f(x)$, $\forall x,\xi$, and $\mathbb{E}(\|(G(x;\xi))-\nabla f(x)\|^2)\le \sigma^2$, $\forall x$.
\end{assumption}

\begin{assumption}[Uniformly Upper Bounded Delay]\label{assump_delay} 
Consider the probability series of random delays $\{\tau_k\}_{k=1}^{\infty}$. 
There exists a series $\{a_i\}_{i=1}^{\infty}$ such that i) $\mathbb{P}(\tau_k=i) \le a_i$, $\forall k$, and ii) $\sum_{i=1}^{\infty} i^2a_i < \infty.$
\end{assumption}

Assumption~\ref{assump_delay} covers the delays that are {\em heavy-tailed distributed}.
This includes discrete log-normal, discrete T-distribution, discrete Weibull, etc. Here, discrete log-normal means $\mathbb{P}(\tau=i)=\mathbb{P}(i\le x <i+1),$ where $x$ is a random variable that is log-normal distributed. 
Similar distributions can be defined for discrete T-distribution and discrete Weibull
We will further discuss the extension of this assumption in Section~\ref{subsec_delay_var}.

\subsection{Convergence for Async-SGD with Unbounded Delay}

To establish the convergence results of Async-SGD with unbounded delay, consider the following Lyapunov funct{}ion:
\begin{equation} \label{eqn_Lyapunov}
\zeta^k=f(x_k)-f(x^*)+\sum_{j=1}^{k}c_j\|x_{k+1-j}-x_{k-j}\|^2.
\end{equation}
In the Lyapunov function in (\ref{eqn_Lyapunov}), the first part $f(x_{k})-f(x^{*})$ measures the optimality error between current objective value and the optimal objective value.
The second part $\sum_{j=1}^{\infty}c_j\|x_{k+1-j}-x_{k-j}\|^2$ in (\ref{eqn_Lyapunov}) is a weighted sum of the distances between past iterates, where the properties of weights $\{c_i\}_{i=1}^{\infty}$ will be described soon.
The second part can be viewed as the accumulative error due to asynchronous gradient updates. 
Here, we give the following lemma to establish the relationship between the delay probability series $\{a_i\}_{i=1}^{\infty}$ and the weight sequence $\{c_i\}_{i=1}^{\infty}.$ 
\begin{lemma}\label{lemma: delay probability} 
Under Assumption \ref{assump_delay}, there exists a non-negative sequence $\{c_i\}_{i=1}^{\infty}$, such that 
\begin{equation}\label{relation}
c_{j+1}+\frac{\gamma_kML^2}{2}\sum_{i=j}^{k}i\mathbb{P}(\tau_{k}=i)\le c_j,~\forall~ k,
\end{equation}  where $\tau_k$ denotes the maximum delay in $k$-th iteration, i.e., $\tau_k=\max_{m} \tau_{k,m}$ and $\gamma_k$ is the step-size.
\end{lemma}
The proof of Lemma \ref{lemma: delay probability}  can be found in Appendix \ref{appdx_lem1}.
Lemma \ref{lemma: delay probability} shows that under the delay probability series in Assumption \ref{assump_delay}, the weight sequence will be non-negative, which gaurantees the existence of the Lyapunov function (\ref{eqn_Lyapunov}).
We note that a similar type of Lyapunov function was used in \cite{hannah2016unbounded}, where they used $\|x_k-x^*\|^2$ as the optimality error thanks to the non-expansiveness assumption therein. 
Similar to the discussions in previous work, because of asynchronicity, it is hard to directly show the contraction relationship $\mathbb{E}[f(x_k)-f(x^*)]\le f(x_{k-1})-f(x^*).$ 
However, we can prove the following inequality for the proposed Lyapunov function $\zeta^{k}$, which will play a key role in our subsequent analysis.

\begin{lemma}\label{lemma1}
Under Assumptions~\ref{assump_bnded_obj}--\ref{assump_delay}, if the step-size $\{\gamma_k\}_{k=1}^{\infty}$ satisfies that $\gamma_k\le 1/(2Mc_1+M)$, $\forall k$, then the following inequality holds:
\begin{align} \label{eqn_lyapunov_ineq}
\mathbb{E}\{\zeta^{k+1}|\mathscr{F}^k\}+\frac{\gamma_kM}{2}&\|\nabla f(x_k)\|^2 \notag\\
&
\le \zeta^{k}+ (c_1\gamma_k^2M+\frac{L\gamma_k^2M}{2})\sigma^2.
\end{align}
where $\mathscr{F}^k$ represents the filtration of the history of iterates and delays, i.e., $\mathscr{F}^k = \sigma \langle x_0,x_1,\ldots,x_k;$ $\tau_{1},\ldots,\tau_{k}\rangle$
\end{lemma}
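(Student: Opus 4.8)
The plan is to start from the $L$-smoothness descent inequality implied by Assumption~\ref{assump_Lipschitz_grad}, namely $f(x_{k+1}) \le f(x_k) + \langle \nabla f(x_k), x_{k+1}-x_k\rangle + \frac{L}{2}\|x_{k+1}-x_k\|^2$, to substitute the Async-SGD update $x_{k+1}-x_k = -\gamma_k\sum_{m=1}^{M}G(x_{k-\tau_{k,m}};\xi_{k,m})$, and then to take conditional expectation. I would first condition on $\mathscr{F}^k$ together with the current delays, so that the only remaining randomness is in the samples $\xi_{k,m}$. Writing $S_k := \sum_{m=1}^{M}\nabla f(x_{k-\tau_{k,m}})$ for the conditional mean of the aggregated stochastic gradient, Assumption~\ref{assump4} (unbiasedness, plus independence of the $\xi_{k,m}$ across the mini-batch) gives that the conditional mean of the update is $S_k$ and that $\mathbb{E}[\|\sum_m G(x_{k-\tau_{k,m}};\xi_{k,m})\|^2\mid\cdot] \le \|S_k\|^2 + M\sigma^2$. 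The $M\sigma^2$ variance contribution, carried by the two quadratic pieces whose combined coefficient is $(\frac{L}{2}+c_1)\gamma_k^2$, is exactly the source of the $(c_1\gamma_k^2 M + \frac{L\gamma_k^2 M}{2})\sigma^2$ term on the right-hand side.

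The crux is to control the deterministic (mean) part. After expanding $\zeta^{k+1}-\zeta^k$, the piece $c_1\|x_{k+1}-x_k\|^2$ of the weighted-distance sum combines with the smoothness quadratic to produce $Q := -\gamma_k\langle\nabla f(x_k),S_k\rangle + (\frac{L}{2}+c_1)\gamma_k^2\|S_k\|^2$. I would write $S_k = M\nabla f(x_k) + E_k$ with $E_k := \sum_m(\nabla f(x_{k-\tau_{k,m}}) - \nabla f(x_k))$, and retain the \emph{exact} expansion $\|S_k\|^2 = M^2\|\nabla f(x_k)\|^2 + 2M\langle\nabla f(x_k),E_k\rangle + \|E_k\|^2$. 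Here the step-size bound $\gamma_k \le \frac{1}{2Mc_1+ML}$, equivalently $(\frac{L}{2}+c_1)\gamma_k M \le \frac{1}{2}$, does the work: applying Young's inequality to the cross term with parameter exactly $M$, the $\|\nabla f(x_k)\|^2$ coefficient collapses to precisely $-\frac{\gamma_k M}{2}$ and the leftover is the clean residual $\frac{\gamma_k}{2M}\|E_k\|^2$. I expect this to be the main obstacle: the mean-square term $\|S_k\|^2$ a priori injects a \emph{positive} multiple of $\|\nabla f(x_k)\|^2$, so a lossy bound such as $\|S_k\|^2 \le 2M^2\|\nabla f(x_k)\|^2 + 2\|E_k\|^2$ would fail to preserve the $-\frac{\gamma_k M}{2}$ coefficient; only the exact expansion together with the precise threshold on $\gamma_k$ yields the stated constant.

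Next I would translate the residual into the Lyapunov distances. By Assumption~\ref{assump_Lipschitz_grad}, $\|E_k\|^2 \le M L^2\sum_m\|x_{k-\tau_{k,m}}-x_k\|^2$, and a telescoping-plus-Cauchy--Schwarz bound gives $\|x_{k-\tau_{k,m}}-x_k\|^2 \le \tau_{k,m}\sum_{l=0}^{\tau_{k,m}-1}\|x_{k-l}-x_{k-l-1}\|^2$. Upper bounding each $\tau_{k,m}$ by $\tau_k=\max_m\tau_{k,m}$ and taking expectation over the delay distribution, I would swap the order of the resulting double sum and reindex to obtain
\[
\mathbb{E}\Big[\frac{\gamma_k}{2M}\|E_k\|^2 \,\big|\, \mathscr{F}^k\Big] \le \sum_{j=1}^{k}\|x_{k+1-j}-x_{k-j}\|^2\cdot\frac{\gamma_k M L^2}{2}\sum_{i=j}^{k} i\,\mathbb{P}(\tau_k=i),
\]
where I use $\tau_k\le k$ so that $\mathbb{P}(\tau_k=i)=0$ for $i>k$.

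Finally I would assemble the Lyapunov difference. The weighted-distance part of $\zeta^{k+1}-\zeta^k$ contributes $c_1\|x_{k+1}-x_k\|^2 + \sum_{j=1}^{k}(c_{j+1}-c_j)\|x_{k+1-j}-x_{k-j}\|^2$; the first summand has already been folded into $Q$, and adding the delay bound above to the telescoping sum makes the coefficient of each $\|x_{k+1-j}-x_{k-j}\|^2$ equal to $(c_{j+1}-c_j)+\frac{\gamma_k M L^2}{2}\sum_{i=j}^{k} i\,\mathbb{P}(\tau_k=i)$. This is precisely the quantity that Assumption~\ref{assum4} forces to be nonpositive, so the entire accumulated-error block drops out. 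What remains is $\mathbb{E}[\zeta^{k+1}\mid\mathscr{F}^k] \le \zeta^k - \frac{\gamma_k M}{2}\|\nabla f(x_k)\|^2 + (c_1\gamma_k^2 M + \frac{L\gamma_k^2 M}{2})\sigma^2$, which is the claimed inequality after moving the gradient-norm term to the left.
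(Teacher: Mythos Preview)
Your proposal is correct and follows the same overall architecture as the paper's proof: descent inequality from $L$-smoothness, separation of the stochastic-gradient sum into mean and variance parts, Lipschitz bound plus telescoping/Cauchy--Schwarz to convert the delay error into the weighted past increments $\|x_{k+1-j}-x_{k-j}\|^2$, and finally Assumption~\ref{assum4} to absorb those against $c_{j+1}-c_j$. The one genuine difference is how you isolate the $-\tfrac{\gamma_kM}{2}\|\nabla f(x_k)\|^2$ term. The paper applies the polarization identity $\langle a,b\rangle=\tfrac12(\|a\|^2+\|b\|^2-\|a-b\|^2)$ to each $\langle\nabla f(x_k),\nabla f(x_{k-\tau_{k,m}})\rangle$ and uses the lossy bound $\|S_k\|^2\le M\sum_m\|\nabla f(x_{k-\tau_{k,m}})\|^2$ on the quadratic; the step-size condition then makes the combined coefficient $Mc_1\gamma_k^2+\tfrac{ML\gamma_k^2}{2}-\tfrac{\gamma_k}{2}$ of the \emph{delayed} gradient norms $\|\nabla f(x_{k-\tau_{k,m}})\|^2$ nonpositive, so those terms are dropped. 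You instead split $S_k=M\nabla f(x_k)+E_k$, keep the exact expansion of $\|S_k\|^2$, and apply Young with parameter $M$ to the cross term; here the same step-size condition ensures the cross-term coefficient $2\alpha M-1\le 0$ has the right sign, and the $\|\nabla f(x_k)\|^2$ coefficient then collapses algebraically to exactly $-\tfrac{\gamma_kM}{2}$. Both routes land on the identical residual $\tfrac{\gamma_k}{2}\sum_m\|\nabla f(x_k)-\nabla f(x_{k-\tau_{k,m}})\|^2$, and from there the proofs coincide. Your variance bound $\|S_k\|^2+M\sigma^2$ is in fact tighter than the paper's $M\sigma^2+M\sum_m\|\nabla f(x_{k-\tau_{k,m}})\|^2$, though the extra slack in the paper's version is harmless since it is absorbed by the step-size condition anyway.
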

The proof of Lemma \ref{lemma1} can be found in Appendix \ref{appdx_thm1}.
Lemma~\ref{lemma1} connects the total error $\zeta$ and the convergence criterion $\| \nabla f(\cdot) \|^2$. 
Intuitively, we can see that if the second term in the right hand side of (\ref{eqn_lyapunov_ineq}) is summable, then $\|\nabla f(\cdot) \|^2$ should also be summable. 
Based on Lemma~\ref{lemma1} and by applying the supermartingale convergence theorem in \cite{hannah2016unbounded,combettes2015stochastic}, we have following main convergence result for Async-SGD:

\begin{myTheo} \label{Theo1}
Under Assumptions~\ref{assump_bnded_obj}--\ref{assump_delay}, if the step-size sequence $\{\gamma_k\}_{k=1}^{\infty}$ satisfies: i) $\gamma_k\le 1/(2Mc_1+ML)$, $\forall k$; ii) $\sum_{k=1}^{\infty} \gamma_{k} = \infty$; and iii) $\sum_{k=1}^{\infty} \gamma_{k}^{2} < \infty$, where $M$ is the fixed batch size, $L$ is the Lipschitz constant in Assumption 2, and $c_1$ is the first element in the sequence $\{c_{i}\}_{i=1}^{\infty}$,
then we have $\mathbb{E}\{\sum_{k=1}^{\infty}\gamma_k\|\nabla f(x_k)\|^2\}< \infty$ and $\mathbb{E}\{\|\nabla f(x_k)\|^2\} \rightarrow 0.$
\end{myTheo}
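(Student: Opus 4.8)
The plan is to recognize the one-step inequality of Lemma~\ref{lemma1} as an instance of the Robbins--Siegmund supermartingale convergence theorem and to read off both conclusions from it. First I would put (\ref{eqn_lyapunov_ineq}) into the canonical form
\begin{equation}
\mathbb{E}\{\zeta^{k+1}\mid\mathscr{F}^k\}\;\le\;\zeta^{k}-\frac{\gamma_k M}{2}\|\nabla f(x_k)\|^2+W_k,\qquad W_k:=\Big(c_1 M+\tfrac{LM}{2}\Big)\gamma_k^2\sigma^2,
\end{equation}
and check the hypotheses of the theorem: (a) $\zeta^{k}\ge 0$, since $f(x_k)-f(x^*)\ge 0$ by Assumption~\ref{assump_bnded_obj} and every summand $c_j\|x_{k+1-j}-x_{k-j}\|^2$ is nonnegative because $c_j\ge 0$ by Assumption~\ref{assum4}; (b) the subtracted term $\frac{\gamma_k M}{2}\|\nabla f(x_k)\|^2$ is nonnegative; and (c) $\sum_k W_k=(c_1 M+\tfrac{LM}{2})\sigma^2\sum_k\gamma_k^2<\infty$ by condition~(iii). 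The supermartingale convergence theorem then gives, almost surely, that $\zeta^{k}$ converges to a finite limit and that $\sum_{k=1}^{\infty}\gamma_k\|\nabla f(x_k)\|^2<\infty$.

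For the first asserted conclusion I would not route through the almost-sure statement but take expectations in (\ref{eqn_lyapunov_ineq}) and telescope. Using the tower property and summing $\mathbb{E}\zeta^{k+1}+\frac{\gamma_k M}{2}\mathbb{E}\|\nabla f(x_k)\|^2\le \mathbb{E}\zeta^{k}+W_k$ over $k=1,\dots,K$, then discarding the nonnegative tail $\mathbb{E}\zeta^{K+1}$, yields
\begin{equation}
\sum_{k=1}^{K}\frac{\gamma_k M}{2}\,\mathbb{E}\|\nabla f(x_k)\|^2\;\le\;\mathbb{E}\zeta^{1}+\sum_{k=1}^{\infty}W_k\;<\;\infty .
\end{equation}
Letting $K\to\infty$ and using monotone convergence to swap the infinite sum with the expectation establishes $\mathbb{E}\{\sum_{k=1}^{\infty}\gamma_k\|\nabla f(x_k)\|^2\}<\infty$.

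It then remains to upgrade summability to the vanishing limit $g_k:=\mathbb{E}\|\nabla f(x_k)\|^2\to 0$. The bound $\sum_k\gamma_k g_k<\infty$ together with the divergence $\sum_k\gamma_k=\infty$ of condition~(ii) immediately forces $\liminf_k g_k=0$, since a uniform positive lower bound on $g_k$ would make $\sum_k\gamma_k g_k$ diverge. To promote $\liminf_k g_k=0$ to $\lim_k g_k=0$ I would argue by contradiction: if $\limsup_k g_k>0$, fix levels $0<a<b<\limsup_k g_k$, so that $g_k$ performs infinitely many up-crossings from below $a$ to above $b$; a control on the per-step variation $|g_{k+1}-g_k|$ then forces the step-size mass $\sum_{k\in B}\gamma_k$ over each crossing block $B$ to exceed a fixed constant, and since $g_k\ge a$ throughout $B$ each block contributes at least $a\sum_{k\in B}\gamma_k$ to $\sum_k\gamma_k g_k$, so infinitely many blocks contradict $\sum_k\gamma_k g_k<\infty$.

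The step I expect to be the main obstacle is exactly this per-step variation bound. Applying Assumption~\ref{assump_Lipschitz_grad} bounds $\|\nabla f(x_{k+1})\|^2-\|\nabla f(x_k)\|^2$ by $L\|x_{k+1}-x_k\|\big(\|\nabla f(x_{k+1})\|+\|\nabla f(x_k)\|\big)$, and the update gives $\|x_{k+1}-x_k\|=\gamma_k\|\sum_{m=1}^{M}G(x_{k-\tau_{k,m}};\xi_{k,m})\|$, which injects stochastic gradients evaluated at unboundedly stale iterates. Since no uniform gradient bound is assumed, these terms are not controlled outright; I would instead use Assumption~\ref{assump4} to pass from $\mathbb{E}\|G(x;\xi)\|^2$ to $\|\nabla f(x)\|^2+\sigma^2$, and then invoke the uniform second-moment delay control built into Assumption~\ref{assum4} to keep the contributions of the delayed gradient norms summable against $\gamma_k$. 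This is the delicate ingredient that ultimately controls the variation and closes the contradiction, yielding $g_k\to 0$.
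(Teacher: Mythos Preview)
Your route to $\mathbb{E}\{\sum_k\gamma_k\|\nabla f(x_k)\|^2\}<\infty$ is essentially the paper's: both rest on Lemma~\ref{lemma1} and the Robbins--Siegmund supermartingale structure. The paper applies the almost-sure supermartingale convergence theorem and then passes to expectations; your direct telescoping in expectation is cleaner and in fact avoids an unjustified step in the paper (``$\sum_k\theta^k<\infty$ a.s.\ $\Rightarrow$ $\mathbb{E}\sum_k\theta^k<\infty$'').

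For the second conclusion $g_k:=\mathbb{E}\|\nabla f(x_k)\|^2\to 0$, the paper offers no argument beyond the single word ``implies.'' You correctly observe that $\sum_k\gamma_k g_k<\infty$ together with $\sum_k\gamma_k=\infty$ yields only $\liminf_k g_k=0$, and that the upgrade to full convergence needs a per-step control on $|g_{k+1}-g_k|$. The obstacle you flag is genuine and your sketch does not resolve it: Assumption~\ref{assum4} constrains only the delay \emph{distribution}---the tail masses $\sum_{i\ge j} i\,\mathbb{P}(\tau_k=i)$---not the magnitudes $\|\nabla f(x_{k-\tau_{k,m}})\|$, so it cannot by itself bound $\mathbb{E}\big\|\sum_m G(x_{k-\tau_{k,m}};\xi_{k,m})\big\|^2$ uniformly in $k$. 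The telescoping does give $\sup_k\mathbb{E}\zeta^k<\infty$, hence $\sup_k\mathbb{E}[f(x_k)-f(x^*)]<\infty$ and $\sup_k\sum_j c_j\mathbb{E}\|x_{k+1-j}-x_{k-j}\|^2<\infty$, but for non-convex $f$ neither delivers $\sup_k g_k<\infty$ nor the required variation bound. In short, your proposal is at least as complete as the paper's own proof; the step you single out as ``the main obstacle'' is a real gap that the paper asserts away rather than proves.
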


Due to space limitation, we relegate the proof of Theorem~\ref{Theo1} to Appendix \ref{appdx_thm1}.
Next, we show that Theorem~\ref{Theo1} implies that we can properly choose the step-size sequence $\{\gamma_k\}_{k=1}^{\infty}$ to obtain an $o(1/\sqrt{k})$ convergence rate for Async-SGD:
\begin{mypro} \label{prop_async_sgd}
Consider the diminishing step-size sequence $\gamma_k=O\big(1/k^{1/2}\log(k)\big)$ and $\gamma_k\le 1/(2Mc_1+ML)$, $k=1,2,\ldots$.
Then, the asymptotic convergence rate for Async-SGD is: 
\begin{align} \label{eqn_small_o1}
\mathbb{E}\{\|\nabla f(x_k)\|^2\}=o(1/\sqrt{k}).
\end{align}
\end{mypro}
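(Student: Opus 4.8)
The plan is to read Proposition~\ref{prop_async_sgd} as a corollary of Theorem~\ref{Theo1}, so the first task is merely to check that the prescribed step-size meets hypotheses (i)--(iii) and then to convert the resulting summability into a rate. Condition (i), $\gamma_k\le \frac{1}{2Mc_1+ML}$, is imposed directly. Reading $\gamma_k=O(1/(k^{1/2}\log k))$ as the two-sided $\gamma_k\asymp 1/(k^{1/2}\log k)$ (a lower bound is genuinely needed, since (ii) is a divergence requirement), I would verify (ii) by comparison: because $k^{1/2}\ge \log k$ for all large $k$, we have $\gamma_k\ge c/k$ eventually, so $\sum_k 1/k=\infty$ forces $\sum_k\gamma_k=\infty$. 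For (iii), $\gamma_k^2\asymp 1/(k(\log k)^2)$, and the integral test (with $\int^{\infty}dx/(x(\log x)^2)$ finite) gives $\sum_k\gamma_k^2<\infty$. Theorem~\ref{Theo1} then applies and yields $\mathbb{E}\{\sum_k\gamma_k\|\nabla f(x_k)\|^2\}<\infty$; since every summand is non-negative, I may exchange the sum and the expectation (monotone convergence) to obtain $\sum_k\gamma_k\,a_k<\infty$, where I abbreviate $a_k:=\mathbb{E}\{\|\nabla f(x_k)\|^2\}$. Substituting the step-size, this reads $\sum_k a_k/(k^{1/2}\log k)<\infty$.

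The second task is to upgrade this weighted summability to the pointwise statement $a_k=o(1/\sqrt k)$, i.e.\ $\sqrt k\,a_k\to 0$. My approach is by contradiction: suppose $\limsup_k\sqrt k\,a_k=\delta>0$, so that $a_k\ge \frac{\delta}{2}k^{-1/2}$ for infinitely many indices $k$. For each such index, $\gamma_k a_k\ge \frac{c'}{k^{1/2}\log k}\cdot k^{-1/2}=\frac{c'\delta}{2}\cdot\frac{1}{k\log k}$. The crucial arithmetic fact is $\sum_k\frac{1}{k\log k}=\infty$ (its partial sums grow like $\log\log k$), so if the ``bad'' indices were dense enough the tail of $\sum_k\gamma_k a_k$ would diverge, contradicting the previous step. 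To make ``dense enough'' precise I would spread each bad value over a whole dyadic block $[k/2,k]$, on which $\sum_{j\in[k/2,k]}\frac{1}{j\log j}\asymp\frac{1}{\log k}$; summing over disjoint dyadic scales $k\approx 2^j$ yields $\sum_j\frac{1}{j}=\infty$, exactly the growth needed to contradict finiteness of $\sum_k\gamma_k a_k$. Note that the $\log$ in the step-size is tuned to sit precisely at this boundary: it is large enough that the squared factor $(\log)^2$ makes $\sum_k\gamma_k^2<\infty$, yet small enough that $\sum_k\frac{1}{k\log k}$ still diverges, and this is what pins the attainable exponent at $o(k^{-1/2})$ rather than something slower.

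The step I expect to be the main obstacle is justifying the ``spreading over a dyadic block'' move, which implicitly demands that $a_k$ not collapse between neighbouring indices inside a block, i.e.\ a (near-)monotone control on $a_k$. Weighted summability alone does not rule out arbitrarily sparse, tall spikes in $a_k$, and such spikes are consistent with $\sum_k\gamma_k a_k<\infty$ while violating $a_k=o(k^{-1/2})$ pointwise; hence the rate cannot follow from Theorem~\ref{Theo1} by a purely summation-theoretic argument. The remedy I would pursue is to exploit the descent structure already in Lemma~\ref{lemma1}: taking conditional expectations gives an almost-supermartingale recursion $\mathbb{E}\{\zeta^{k+1}\mid\mathscr{F}^k\}\le\zeta^k+r_k$ with $r_k=(c_1\gamma_k^2M+\frac{L\gamma_k^2M}{2})\sigma^2$ and $\sum_k r_k<\infty$ by (iii); this bounds the upward fluctuations across an entire block by a summable remainder and supplies the quasi-monotonicity that legitimizes the block argument. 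If one declines to establish this regularity, the same estimates still deliver the rate in the standard best-iterate sense, $\min_{j\le k}a_j=o(k^{-1/2}\log k)$, and it is exactly the block refinement that removes the residual $\log$ to reach the clean $o(1/\sqrt k)$ asserted in \eqref{eqn_small_o1}.
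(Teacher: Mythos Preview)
Your overall strategy coincides with the paper's: verify the three step-size hypotheses of Theorem~\ref{Theo1}, read off $\mathbb{E}\{\sum_k\gamma_k\|\nabla f(x_k)\|^2\}<\infty$, and argue the rate by contradiction. You are in fact more careful than the paper on two points. First, you correctly note that the stated $\gamma_k=O(1/(k^{1/2}\log k))$ must be read two-sided for hypothesis~(ii) to hold; the paper uses the $O$ notation but silently treats it as a $\Theta$. Second, and more substantively, you flag that a bare contradiction hypothesis ``$\limsup_k\sqrt{k}\,a_k>0$'' only delivers a lower bound along a subsequence, which does not by itself force divergence of $\sum_k\gamma_k a_k$. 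The paper's appendix proof simply asserts that ``if $\mathbb{E}\{\|\nabla f(x_k)\|^2\}=O(1/\sqrt{k})$'' then $\gamma_k a_k$ is comparable to $1/(k\log k)$ and hence unsummable --- but an $O$ upper bound gives no divergence, and the negation of $o(1/\sqrt{k})$ is not ``$O(1/\sqrt{k})$'' anyway. So the gap you identify is genuine, and the paper's own argument does not close it.

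Where your proposal also falls short is the remedy you propose for that gap. Lemma~\ref{lemma1} yields an almost-supermartingale recursion for the Lyapunov function $\zeta^k=f(x_k)-f(x^*)+\sum_j c_j\|x_{k+1-j}-x_{k-j}\|^2$, not for $a_k=\mathbb{E}\{\|\nabla f(x_k)\|^2\}$; quasi-monotonicity of $\mathbb{E}[\zeta^k]$ does not transfer to $a_k$ without a separate link between function values and gradient norms, which in a general non-convex setting is unavailable. To legitimize the dyadic-block spreading you would instead need to control $|a_{k+1}-a_k|$ directly, e.g.\ via $\|\nabla f(x_{k+1})-\nabla f(x_k)\|\le L\|x_{k+1}-x_k\|$ together with a second-moment bound on the update $\|x_{k+1}-x_k\|$; this is plausible under Assumptions~\ref{assump_Lipschitz_grad}--\ref{assump4} but requires an additional argument that neither you nor the paper supplies. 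Absent such a step, what both proofs rigorously establish is the best-iterate statement $\min_{j\le k}\mathbb{E}\{\|\nabla f(x_j)\|^2\}=o(1/\sqrt{k})$ (up to a logarithm), not the last-iterate claim in~\eqref{eqn_small_o1}.
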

The basic proof idea of the Little-O rates is based on contradiction, and the detials are provided in Appendix \ref{Append: proposition2}.
%
Proposition~\ref{prop_async_sgd} shows that as the number of iterations increases, the negative effect of outdated gradient information in Async-SGD vanishes asymptotically under the chosen step-sizes. 

\subsection{Async-SGD with increasing batch size}

To analyze convergence performance of Async-SGD with increasing batch size (Async-SGDI), we extend Lemma \ref{lemma1} to obtain following inequality:

\begin{lemma} \label{lemma2}
Under Assumptions~\ref{assump_bnded_obj}--\ref{assump_delay}, if the step-size sequence $\{\gamma_k\}_{k=1}^{\infty}$ satisfies $\gamma_k\le 1/(2Mc_1+ML)$, $\forall k$, then the following inequality holds for Async-SGDI:
\begin{align}
\mathbb{E}\{\zeta^{k+1}|\mathscr{F}^k\}+\frac{\gamma_kM}{2}&\|\nabla f(x_k)\|^2\le \notag\\
& \zeta^{k}+ \Big(c_1\gamma_k^2M+\frac{L\gamma_k^2M}{2} \Big)\frac{\sigma^2}{n_k},
\end{align} 
where $M$ denotes the initial batch size and $\{n_k\}$ is some integer-valued increasing sequence.
\end{lemma}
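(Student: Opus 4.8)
The plan is to follow the proof of Lemma~\ref{lemma1} essentially verbatim, carefully tracking the only two places where the enlarged batch $n_k M$ and the averaging factor $1/n_k$ in the Async-SGDI update actually change anything. Writing $\Delta_k := x_{k+1}-x_k = -\frac{\gamma_k}{n_k}\sum_{m=1}^{n_k M} G(x_{k-\tau_{k,m}};\xi_{k,m})$, I would start from the $L$-smoothness inequality of Assumption~\ref{assump_Lipschitz_grad},
\[
f(x_{k+1}) \le f(x_k) + \langle \nabla f(x_k),\Delta_k\rangle + \tfrac{L}{2}\|\Delta_k\|^2,
\]
take the conditional expectation $\mathbb{E}\{\cdot\,|\,\mathscr{F}^k\}$, and invoke unbiasedness (Assumption~\ref{assump4}), so that $\mathbb{E}\{G(x_{k-\tau_{k,m}};\xi_{k,m})\,|\,\mathscr{F}^k\}=\nabla f(x_{k-\tau_{k,m}})$ and the cross term becomes $-\tfrac{\gamma_k}{n_k}\sum_{m=1}^{n_k M}\langle\nabla f(x_k),\nabla f(x_{k-\tau_{k,m}})\rangle$. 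Append the new Lyapunov summand $c_1\|x_{k+1}-x_k\|^2$ to form $\zeta^{k+1}$ exactly as in the fixed-batch case.

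The central observation is that every \emph{mean-related} quantity carries the prefactor $\tfrac{1}{n_k}\sum_{m=1}^{n_k M}(\cdot)$, whose leading constant collapses to $\tfrac{1}{n_k}\cdot n_k M = M$ --- precisely the constant appearing in Lemma~\ref{lemma1}. Applying the polarization identity $-\langle a,b\rangle=\tfrac12(\|a-b\|^2-\|a\|^2-\|b\|^2)$ to each summand of the cross term, the $\|\nabla f(x_k)\|^2$ contribution is $-\tfrac{\gamma_k}{2n_k}(n_kM)\|\nabla f(x_k)\|^2=-\tfrac{\gamma_k M}{2}\|\nabla f(x_k)\|^2$, reproducing the left-hand-side term verbatim. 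The Lipschitz piece $\tfrac{\gamma_k}{2n_k}\sum_m\|\nabla f(x_k)-\nabla f(x_{k-\tau_{k,m}})\|^2\le\tfrac{\gamma_k L^2}{2n_k}\sum_m\|x_k-x_{k-\tau_{k,m}}\|^2$, after bounding each $\|x_k-x_{k-\tau_{k,m}}\|^2\le\tau_{k,m}\sum_{l=1}^{\tau_{k,m}}\|x_{k-l+1}-x_{k-l}\|^2\le\tau_k\sum_{l=1}^{\tau_k}\|x_{k-l+1}-x_{k-l}\|^2$ and again using $\tfrac{1}{n_k}\cdot n_kM=M$, reduces to the \emph{identical} delay-weighted expression of Lemma~\ref{lemma1}; hence taking the delay expectation and telescoping against the weights $\{c_j\}$ through relation~(\ref{relation}) of Assumption~\ref{assum4} goes through unchanged. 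I would also note that the residual delayed-gradient-norm terms, coming with coefficient $\tfrac{1}{2n_k}$ from the cross term and $\tfrac{M}{n_k}$ (after Cauchy--Schwarz) from $\|\Delta_k\|^2$, are rendered non-positive precisely when $(c_1+\tfrac{L}{2})\gamma_k M\le\tfrac12$; the $1/n_k$ cancels on both sides, so the step-size requirement $\gamma_k\le\frac{1}{2Mc_1+ML}$ is unchanged --- consistent with the lemma statement.

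The one genuinely new step is the variance contribution inside $\bigl(c_1+\tfrac{L}{2}\bigr)\gamma_k^2\,\mathbb{E}\{\|\Delta_k/\gamma_k\|^2\,|\,\mathscr{F}^k\}$. Decomposing $\Delta_k/\gamma_k$ into its conditional mean (handled above) and the zero-mean fluctuation $\tfrac{1}{n_k}\sum_{m=1}^{n_k M}\bigl(G(x_{k-\tau_{k,m}};\xi_{k,m})-\nabla f(x_{k-\tau_{k,m}})\bigr)$, the conditional independence of the $n_k M$ samples $\xi_{k,m}$ kills the cross terms, giving
\[
\mathbb{E}\Bigl\{\Bigl\|\tfrac{1}{n_k}\textstyle\sum_{m=1}^{n_k M}(G-\nabla f)\Bigr\|^2\,\Big|\,\mathscr{F}^k\Bigr\}
=\tfrac{1}{n_k^2}\sum_{m=1}^{n_k M}\mathbb{E}\{\|G-\nabla f\|^2\,|\,\mathscr{F}^k\}
\le\tfrac{n_k M}{n_k^2}\sigma^2=\tfrac{M\sigma^2}{n_k}.
\]
This is exactly the Lemma~\ref{lemma1} variance bound $M\sigma^2$ divided by $n_k$, and it is the sole origin of the $\sigma^2/n_k$ factor. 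Collecting everything as in Lemma~\ref{lemma1} then yields the claimed inequality with noise term $\bigl(c_1\gamma_k^2 M+\tfrac{L\gamma_k^2 M}{2}\bigr)\tfrac{\sigma^2}{n_k}$. I expect the main difficulty to be purely bookkeeping --- confirming that the $1/n_k$ and $n_k M$ factors cancel in \emph{every} mean term while surviving (as $1/n_k$) in the variance term, and that the delay/Lyapunov machinery of Assumption~\ref{assum4} is inherited intact --- rather than any new analytic obstacle.
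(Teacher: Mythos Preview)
Your proposal is correct and follows essentially the same approach as the paper's own proof: start from $L$-smoothness, expand the Async-SGDI update, use unbiasedness and the polarization identity on the cross term, bound the delay error via telescoping and Assumption~\ref{assum4}, and then observe that every mean-related factor $\tfrac{1}{n_k}\cdot n_k M$ collapses to $M$ while the variance term picks up the extra $1/n_k$. The paper's Appendix~B does exactly this (with slightly looser bookkeeping, collapsing the sum over $m$ to the single maximal delay $\tau_k$ earlier than you do), and arrives at the same residual coefficient $\bigl(M^2c_1\gamma_k^2+\tfrac{M^2L\gamma_k^2}{2}-\tfrac{\gamma_kM}{2}\bigr)$ on the delayed-gradient norm, which is rendered non-positive by the identical step-size condition you identify.
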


Then, by applying the supermartingale convergence theorem in a similar fashion, we have the key convergence result for Aysnc-SGDI:

\begin{myTheo}\label{Theo2}
Under Assumptions~\ref{assump_bnded_obj}--\ref{assump_delay}, let the batch size sequence be chosen as $\{M_k:=n_kM\}$, where $M$ is the initial batch size and the integer-valued sequence $\{n_{k}\}_{k=1}^{\infty}$ is increasing and satisfies $\sum_{k=1}^{\infty}1/n_k<\infty$. 
Also, suppose that the step-size $\{\gamma_k\}_{k=1}^{\infty}$ satisfies $\gamma_k\le 1/(2Mc_1+ML)$, $\forall k$. 
Then we have $\mathbb{E}\{\sum_{k=1}^{\infty}\gamma_k\|\nabla f(x_k)\|^2\}< \infty$ and $\mathbb{E}\{\|\nabla f(x_k)\|^2\} \rightarrow 0$.
\end{myTheo}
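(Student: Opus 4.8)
The plan is to mirror the proof of Theorem~\ref{Theo1}, exploiting the fact that the recursion in Lemma~\ref{lemma2} already has exactly the Robbins--Siegmund supermartingale structure, and then to observe that the single structural change---the extra factor $1/n_k$ multiplying the noise term---is precisely what lets us dispense with a diminishing step-size. First I would rewrite the inequality of Lemma~\ref{lemma2} as
\[
\mathbb{E}\{\zeta^{k+1}\mid\mathscr{F}^k\} \le \zeta^k - \frac{\gamma_k M}{2}\|\nabla f(x_k)\|^2 + \Big(c_1\gamma_k^2 M + \frac{L\gamma_k^2 M}{2}\Big)\frac{\sigma^2}{n_k},
\]
and identify $a_k := \frac{\gamma_k M}{2}\|\nabla f(x_k)\|^2 \ge 0$ and the noise term $b_k := \big(c_1\gamma_k^2 M + \frac{L\gamma_k^2 M}{2}\big)\sigma^2/n_k \ge 0$. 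Before invoking the convergence theorem I would check that $\zeta^k \ge 0$: the term $f(x_k)-f(x^*)$ is nonnegative by Assumption~\ref{assump_bnded_obj}, and the weighted sum $\sum_j c_j\|x_{k+1-j}-x_{k-j}\|^2$ is nonnegative since $\{c_i\}$ is a non-negative sequence by Assumption~\ref{assum4}.

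The key step---the place where the increasing batch size earns its keep---is verifying $\sum_k b_k < \infty$. Because the step-size is bounded by $\gamma_k \le \gamma_{\max} := \frac{1}{2Mc_1+ML}$, we have $\gamma_k^2 \le \gamma_{\max}^2$, so $b_k \le C/n_k$ with the constant $C := (c_1 M + LM/2)\gamma_{\max}^2\sigma^2$. The hypothesis $\sum_k 1/n_k < \infty$ then gives $\sum_k b_k \le C\sum_k 1/n_k < \infty$. In contrast to Theorem~\ref{Theo1}, summability here comes entirely from the batch-size schedule rather than from $\sum_k \gamma_k^2 < \infty$, which is exactly why no vanishing step-size is needed. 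Applying the supermartingale convergence theorem of \citep{hannah2016unbounded,combettes2015stochastic} then yields that $\zeta^k$ converges a.s.\ to a finite limit and that $\sum_k a_k = \sum_k \frac{\gamma_k M}{2}\|\nabla f(x_k)\|^2 < \infty$ a.s.; summing the inequality over $k$, taking total expectations, and using the nonnegativity of every term (Tonelli) delivers the first claim $\mathbb{E}\{\sum_k \gamma_k\|\nabla f(x_k)\|^2\} < \infty$.

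The remaining claim $\mathbb{E}\{\|\nabla f(x_k)\|^2\}\to 0$ is the part I expect to demand the most care, since the summability just established controls $\gamma_k\,\mathbb{E}\|\nabla f(x_k)\|^2$, not $\mathbb{E}\|\nabla f(x_k)\|^2$ itself. For the canonical use case of a constant step-size $\gamma_k\equiv\gamma\le\gamma_{\max}$ the conclusion is immediate: $\sum_k \mathbb{E}\|\nabla f(x_k)\|^2 = \gamma^{-1}\,\mathbb{E}\{\sum_k\gamma\|\nabla f(x_k)\|^2\}<\infty$, so the summand must vanish. More generally, as long as $\sum_k\gamma_k=\infty$, the bound $\sum_k\gamma_k\,\mathbb{E}\|\nabla f(x_k)\|^2<\infty$ forces $\liminf_k \mathbb{E}\|\nabla f(x_k)\|^2=0$; I would then upgrade this $\liminf$ to a genuine limit by the standard device of invoking the $L$-Lipschitz continuity of $\nabla f$ (Assumption~\ref{assump_Lipschitz_grad}) to bound the per-iteration change of $\|\nabla f(x_k)\|^2$ and rule out persistent oscillations away from zero, exactly as in the conclusion of the proof of Theorem~\ref{Theo1}. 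The main obstacle is thus not the supermartingale machinery but this final passage from weighted summability to honest convergence of the gradient norm, where the distinction between summable and non-summable step-sizes must be handled explicitly.
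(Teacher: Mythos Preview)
Your proposal is correct and follows essentially the same route as the paper: invoke Lemma~\ref{lemma2}, verify that the noise term is summable because $\gamma_k^2\le\gamma_{\max}^2$ and $\sum_k 1/n_k<\infty$, apply the supermartingale convergence theorem of \citep{hannah2016unbounded,combettes2015stochastic}, and pass to expectations. If anything you are more careful than the paper on the last step---the paper's proof simply asserts that $\mathbb{E}\{\sum_k\gamma_k\|\nabla f(x_k)\|^2\}<\infty$ ``implies'' $\mathbb{E}\{\|\nabla f(x_k)\|^2\}\to 0$ without the liminf-to-limit discussion you flag.
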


Again, due to space limitation, we relegate the proof details of Theorem~\ref{Theo2} to our online technique report \cite{zhang2018taming}.
With Theorem~\ref{Theo2}, we claim the following asymptotic convergence rate for Async-SGDI:
\begin{mypro} \label{prop_2}
Let the sequence $\{n_{k}\}_{k=1}^{\infty}$ be chosen as $n_k=\omega(k)$. 
Then, with a fixed step-size satisfying $\gamma \le 1/(2Mc_1+ML)$, we have $\mathbb{E}\{\|\nabla f(x_k)\|^2\}=o(1/k).$
\end{mypro}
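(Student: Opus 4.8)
The plan is to follow the same contradiction skeleton used for Proposition~\ref{prop_async_sgd}, with the constant step-size together with the increasing batch size now playing the role that the tuned vanishing step-size played there. Write $a_k := \mathbb{E}\{\|\nabla f(x_k)\|^2\}$ for brevity. The first step is routine: I would discharge the hypotheses of Theorem~\ref{Theo2}. Since $n_k = \omega(k)$ grows strictly faster than linear, the super-linear growth intended here yields the summability condition $\sum_{k=1}^{\infty} 1/n_k < \infty$; combined with the fixed step-size obeying $\gamma \le \frac{1}{2Mc_1+ML}$, both conditions of Theorem~\ref{Theo2} hold. Theorem~\ref{Theo2} then gives $\mathbb{E}\{\sum_{k=1}^{\infty}\gamma\,\|\nabla f(x_k)\|^2\} < \infty$, and because $\gamma>0$ is a constant it factors out of the sum, leaving $\sum_{k=1}^{\infty} a_k < \infty$. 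This is the exact analogue of Step~1 in Proposition~\ref{prop_async_sgd}, except the weight inside the summable series is now the constant $\gamma$ rather than a vanishing $\gamma_k$.

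The second step is to argue by contradiction that $a_k = o(1/k)$. Suppose not; then $\limsup_{k\to\infty} k\,a_k > 0$, so there exist $c>0$ and indices along which $a_k \ge c/k$. The goal is to show such behaviour forces $\sum_k a_k = \infty$, contradicting the first step. The mechanism is the harmonic (p-)series comparison: since $\sum_k 1/k = \infty$, a lower bound of order $1/k$ holding on a sufficiently dense set of indices makes $\sum_k a_k$ diverge. This parallels Proposition~\ref{prop_async_sgd} precisely, where the weight $\gamma_k = O(1/(\sqrt{k}\log k))$ was chosen so that the boundary rate $1/\sqrt{k}$ produces the non-summable $\sum 1/(k\log k)$; here the constant weight already renders the boundary rate $1/k$ non-summable, which is exactly what upgrades a plain $O(1/k)$ bound to the strict $o(1/k)$.

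The main obstacle is making this contradiction rigorous for the \emph{entire} sequence rather than a mere subsequence: an isolated subsequence on which $a_k \ge c/k$ may be too sparse for $\sum_k a_k$ to diverge (e.g.\ indices $k_j = 2^j$). To close this gap I would establish a near-monotonicity estimate for $a_k$. Using the $L$-Lipschitz gradient (Assumption~\ref{assump_Lipschitz_grad}) to relate $\|\nabla f(x_{k+1})\|$ to $\|\nabla f(x_k)\|$ and $\|x_{k+1}-x_k\|$, and bounding $\mathbb{E}\|x_{k+1}-x_k\|^2$ via Assumption~\ref{assump4} (the increasing batch contributes a variance term of order $1/n_k$, as already exploited in Lemma~\ref{lemma2}, while the mean part is controlled by $a_k$), I would obtain $a_{k+1} \le a_k + b_k$ with $\{b_k\}$ summable. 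Near-monotonicity rules out sparse spikes, so a tail-sum argument, in which $\sum_{j=k}^{2k} a_j \to 0$ bounds $k\,a_{2k}$, converts summability of $\{a_k\}$ into the pointwise rate $a_k=o(1/k)$; the delicate point is that this conversion needs the fluctuation terms controlled strongly enough (effectively a weighted summability such as $\sum_k k\,b_k < \infty$), which is where the super-linear growth $n_k=\omega(k)$ is genuinely used rather than mere summability of $1/n_k$. I expect this fluctuation-control step to be the crux; the appeal to Theorem~\ref{Theo2} and the p-series comparison are otherwise straightforward.
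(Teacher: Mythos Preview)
Your Step~1 and the contradiction skeleton are exactly the paper's argument: verify $\sum_k 1/n_k<\infty$ from $n_k=\omega(k)$, invoke Theorem~\ref{Theo2}, factor out the constant $\gamma$ to get $\sum_k a_k<\infty$, then set up a contradiction against the harmonic series. The paper's proof stops precisely where your worry begins. It writes, in effect, ``suppose $a_k=O(1/k)$; then $\sum_k a_k$ diverges, contradiction'', silently treating the negation of $a_k=o(1/k)$ as a global lower bound $a_k\ge c/k$ for all large $k$. It does not address the sparse-subsequence issue you raise, and it offers no near-monotonicity or regularity argument for $\{a_k\}$.

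So the gap you identify is real, but it is a gap in the paper's own proof as well; your proposal is strictly more careful than the reference argument. If your aim is to reproduce the paper's reasoning, you have already done so and can drop the entire near-monotonicity program. If your aim is a fully rigorous little-$o$ rate, then your instinct is right that something extra is needed; note, however, that the fluctuation control you sketch will run into trouble under the bare hypothesis $n_k=\omega(k)$: the term $b_k$ you isolate carries a $1/n_k$ contribution, and $\sum_k k/n_k<\infty$ (which your tail-sum conversion effectively requires) can fail for, e.g., $n_k=k\log^2 k$. A cleaner route to rigor, still in the spirit of the paper, is to establish the near-monotonicity $a_{k+1}\le a_k+b_k$ with $\sum_k b_k<\infty$ only, and then use the standard fact that a nonnegative, summable, ``almost decreasing'' sequence satisfies $a_k=o(1/k)$ via the averaging bound $k\,\min_{k\le j\le 2k}a_j\le\sum_{j=k}^{2k}a_j\to 0$ combined with $\max_{k\le j\le 2k}|a_j-a_{2k}|\le\sum_{i\ge k}b_i\to 0$; this avoids the weighted summability $\sum_k k\,b_k<\infty$ you flagged as delicate.
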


Proposition~\ref{prop_2} implies that, by using an increasing batch size, Async-SGDI with a constant step-size converges at rate $o(1/k)$.
However, as batch size increases, the runtime per iteration would also become longer.
Hence, it is unclear whether the number of iteration is a good convergence performance metric.
To resolve this ambiguity, we first note that we only need the batch size to grow at a rate $\omega(k)$.
According to the small-omega definition, the batch size can grow linearly with an arbitrarily small slope, i.e., the batch size increases very slowly.
Second, with constant step-size, the algorithm is much more stable numerically. 

\subsection{Discussion} \label{subsec_delay_var}

We can see from the proofs of Theorems~\ref{Theo1} and \ref{Theo2} that the non-negative sequence $\{c_{i}\}_{i=1}^{\infty}$, gauranteed by Assumption \ref{assump_delay}, is a mathematical construct that plays a key role in establishing the convergence of both Async-SGD algorithms.
In what follows, we show that Assumption~\ref{assump_delay} unifies two known delay models as special cases.
The first special case is the bounded delay model, which has been widely assumed and investigated in the literature (cf. \cite{lian2015asynchronous,huo2017asynchronous} etc.). 
This model is reasonable as long as the gradient computation workload is finite for all workers. 
Assume that $\tau_k$ is bounded by a constant $T$. 
\begin{mypro}\label{cor1}(\textbf{Bounded Delay})
If the random delays in gradient updates $\{\tau_k\}_{k=1}^{\infty}$ are uniformly upper bounded by a constant $T>0$, 
then it satisfies Assumption \ref{assump_delay}.
\end{mypro}

The second delay model is such that the sequence of random delays $\{\tau_k\}_{k=1}^{\infty}$ are i.i.d. and the underlying distribution has a finite second moment. 
This model is reasonable when the number of iterations is large and the system has reached the stationary state. 
For this case, we have the following result:

\begin{mypro}\label{cor2}(\textbf{I.I.D. Random Delay})
If the random delays $\tau_k$, $\forall k$, are i.i.d. and the underlying distribution  has a finite second moment, 
then it satisfies Assumption \ref{assump_delay}.
\end{mypro}

\section{Application examples} \label{sec:numerical}

\begin{figure*}[ht]
\centering
\begin{tabular}{@{}ccc@{}}
\includegraphics[width=5cm]{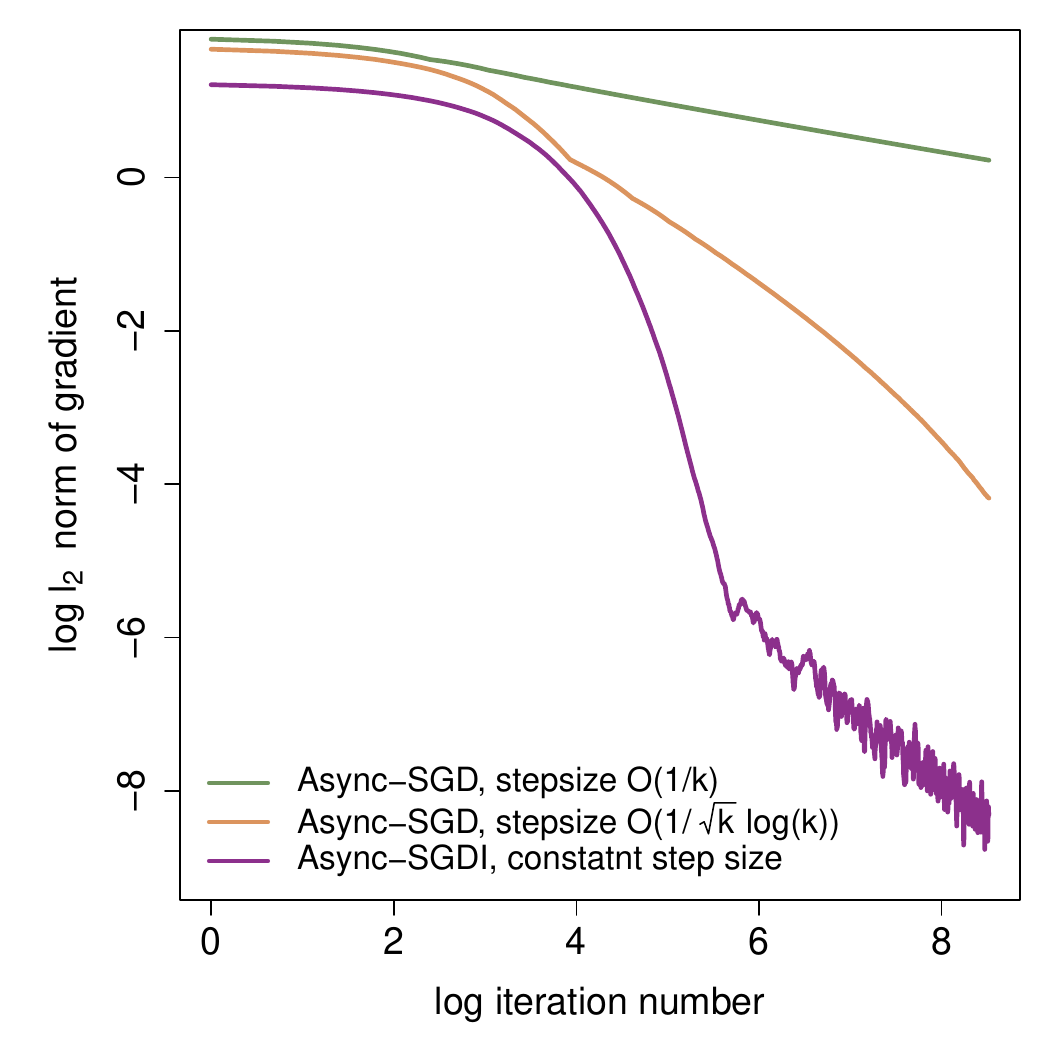}&
\includegraphics[width=5cm]{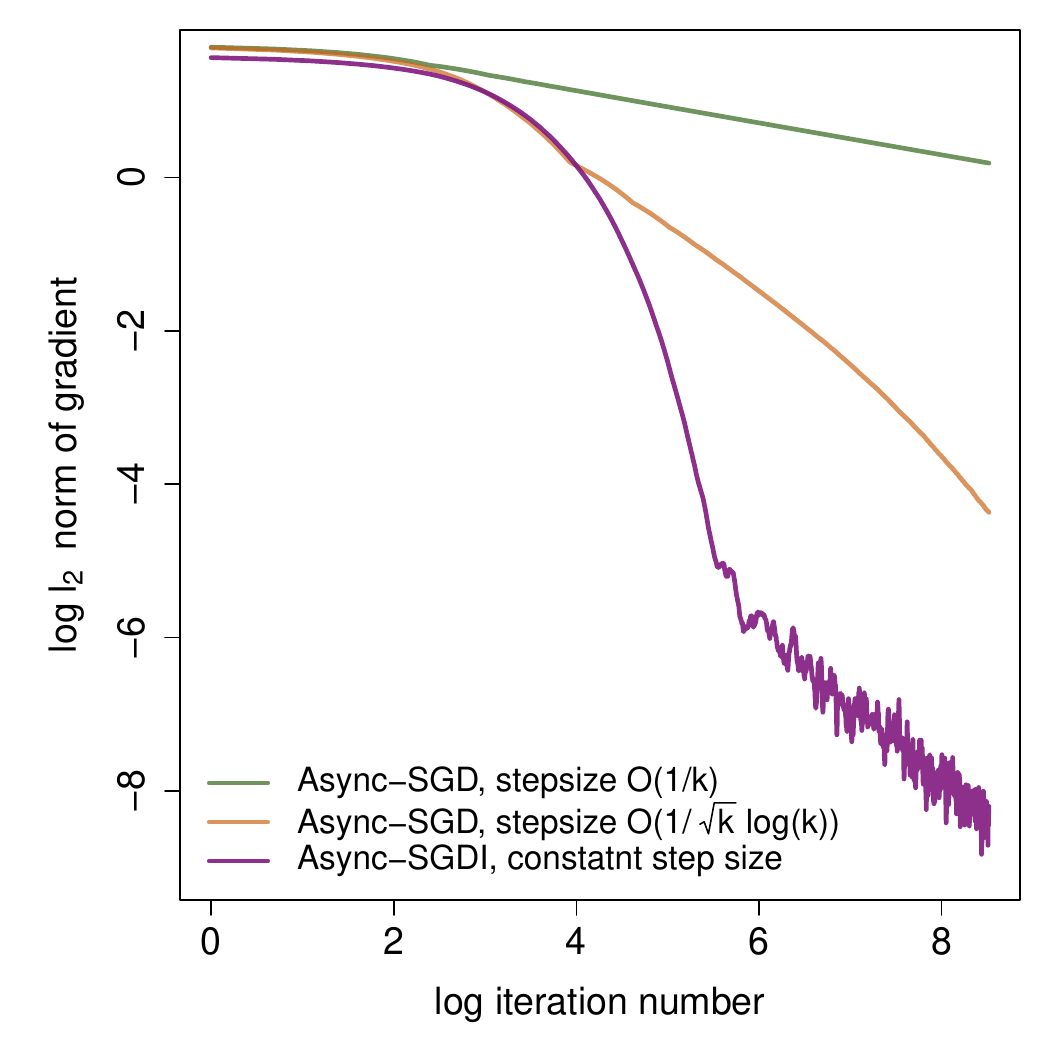}&
\includegraphics[width=5cm]{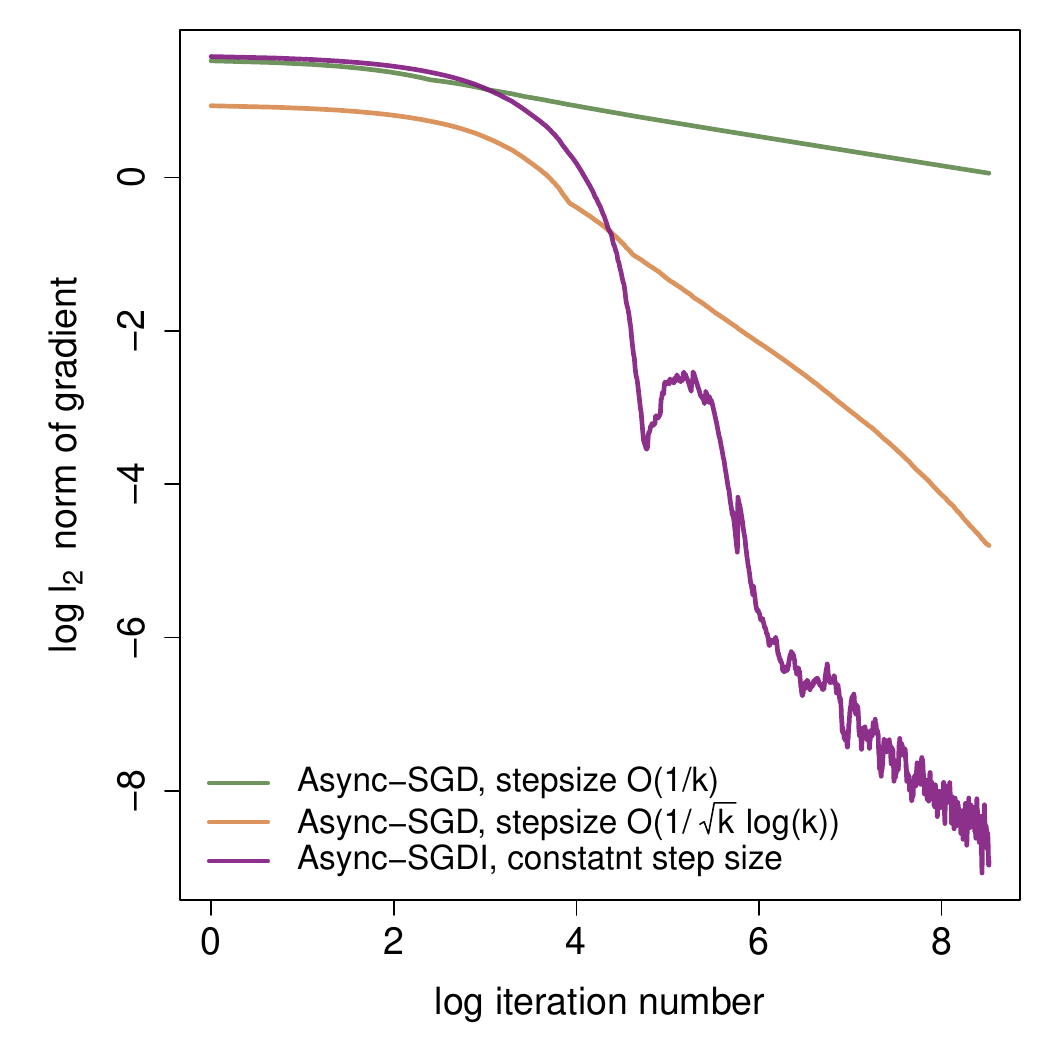}\\
\footnotesize (a) Bounded delay, $\tau\le 20$.&\footnotesize (b) Poisson delay,$\tau\sim Poi(10)$.& \footnotesize (c) System delay.
\end{tabular}
\caption{Simulation results for the convergence of Async-SGD and Async-SGDI with three kinds of delay on matrix completion problem.} 
\vspace{-.1in}
\label{fig_matrix_comp}
\end{figure*}

\begin{figure*}[ht]
\centering
\begin{tabular}{@{}ccc@{}}
\includegraphics[width=5cm]{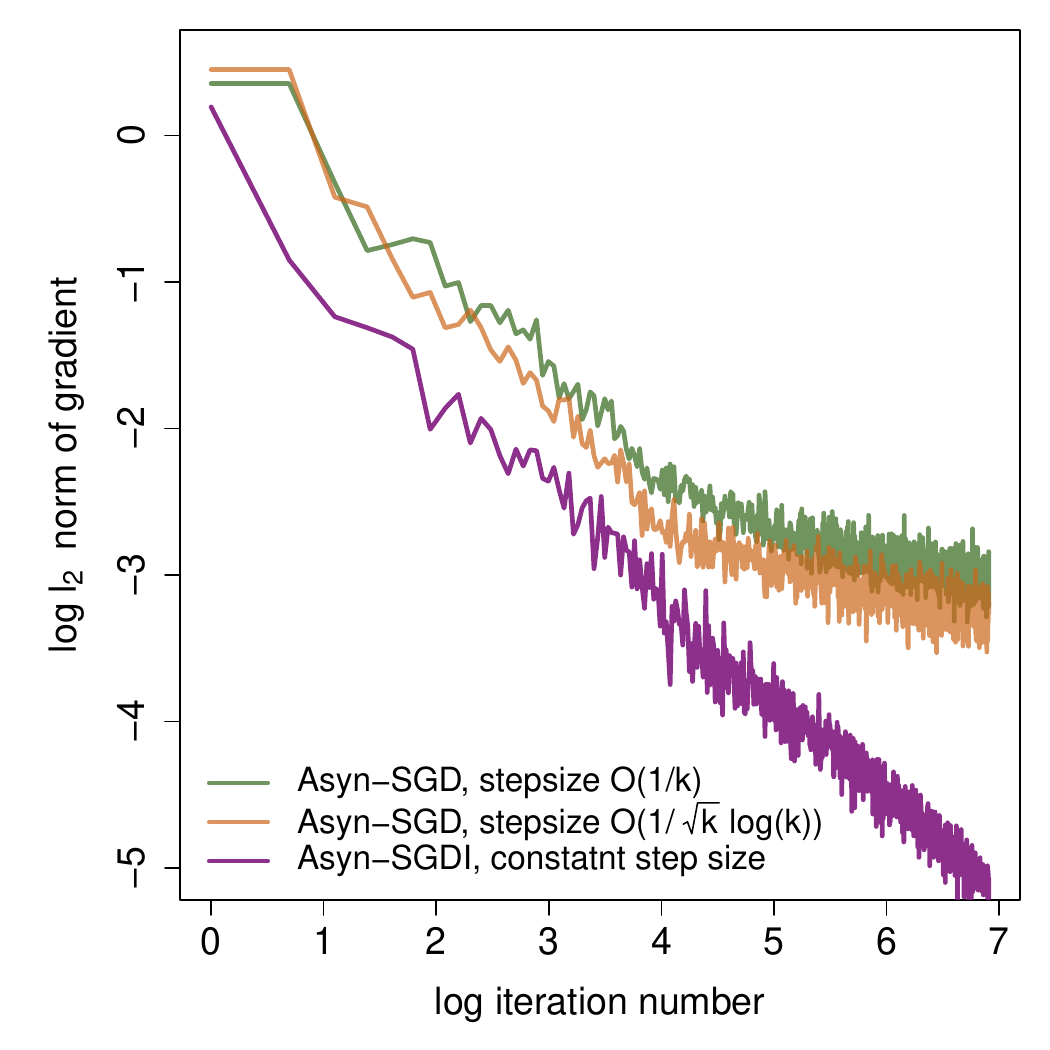}&
\includegraphics[width=5cm]{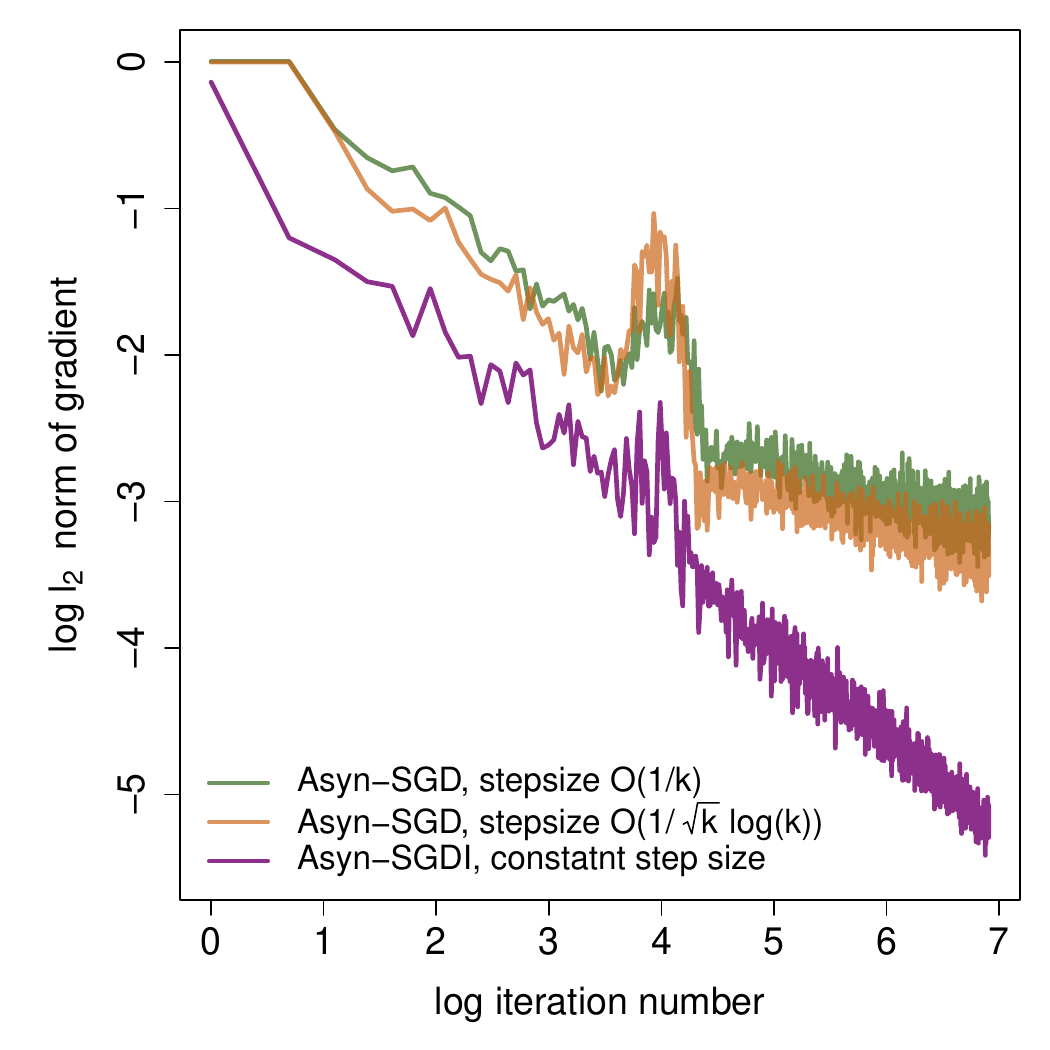}&
\includegraphics[width=5cm]{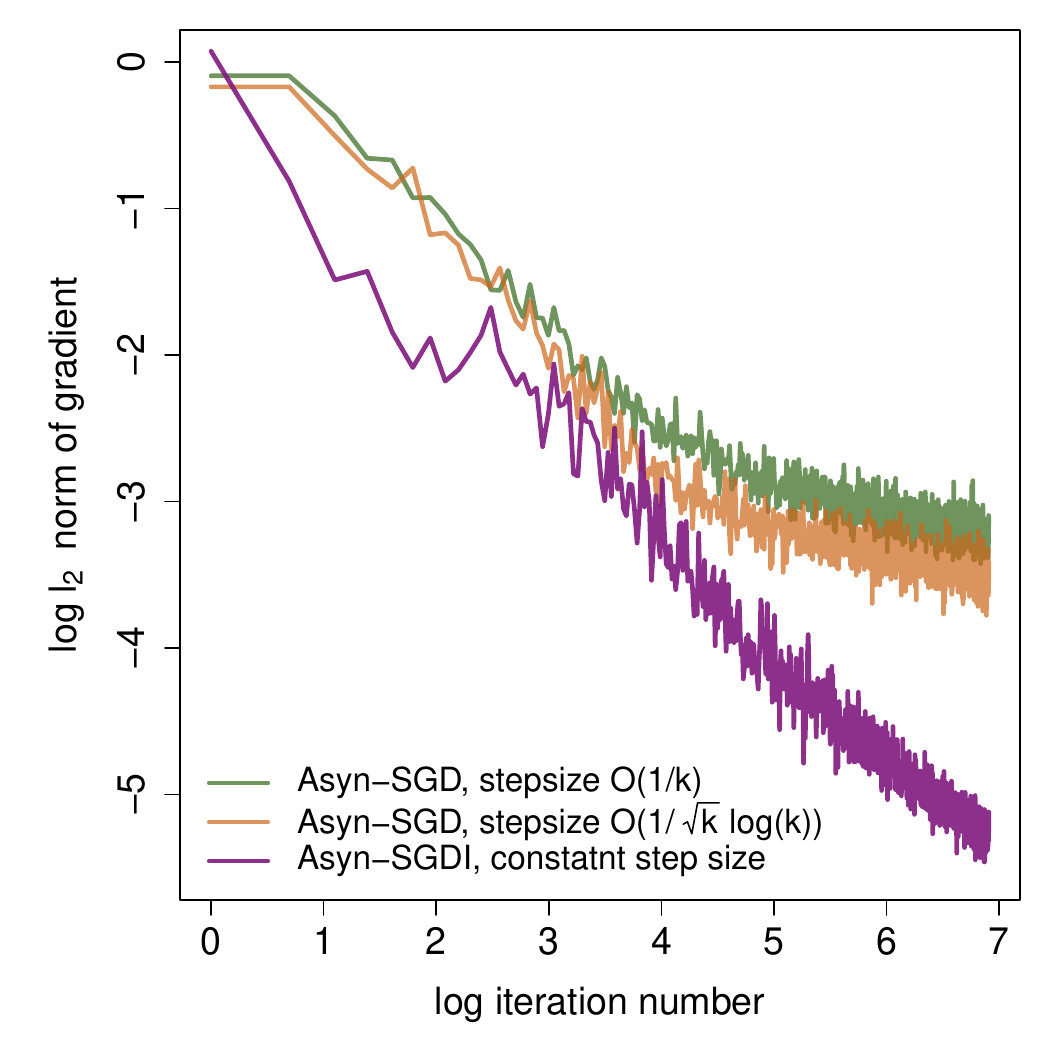}\\
\footnotesize (a) Bounded delay, $\tau\le 50$.&\footnotesize (b) Poisson delay, $\tau\sim Poi(30)$.& \footnotesize (c) System delay.
\end{tabular}\label{MVN}
\caption{Simulation results for the convergence of Async-SGD and Async-SGDI with three kinds of delay on MLE for MVN covariance matrix.} 
\vspace{-.1in}
\end{figure*} 

In this part, we will present several numerical experiments to further validate our theoretical results. 

\subsection{Low-rank matrix completion} \label{sec_low_rank}
First, we apply the Async-SGD algorithm in solving a low-rank matrix completion problem, where the goal of is to find the matrix $X$ with the lowest rank that matches the expectation of observed symmetric matrices, $\mathbb{E}\{A\}$. 
This problem could be mathematically formulated as follows:
\begin{align*}
\min_{Y\in \mathbb{R}^{n\times p}} & \mathbb{E}\{ \|A-YY^T\|^2_F \},
\end{align*} 
where $X=YY^T.$ 
Using SGD to solve this problem has been investigated in many works (see, e.g., \cite{de2014global,balzano2010online} etc.).

In our experiment, we consider three random delay scenarios: 
1) Delay is uniform at random with soupport being the interval [0,20];
2) i.i.d delay with poisson distribution, Poisson(10);
3) Non i.i.d delay, which we call system delay, is simulated from a virtual system with 10 workers whose computation time $t$ for a gradient follows a hierarchical distribution, $t\sim Exp(\lambda)$ and $\lambda \sim Gamma(2,1)$. We let the central server update the parameter when it collets $M$ gradients from the 10 workers.
Note here that for the third delay model, we consider the working time follows a Gamma-Exponential distribution, which are often used for modeling working time. 
The delay is caused by the difference between the working times. 
In addition, for the three scenarios, the delay is 0 in the first iteration.
And the delays of $M$ gradients in each iteration are different. 
But in each iteration, the delays are generated following the same distribution.

Aysnc-SGD and Aysnc-SGDI are applied on our simulated data: the ground truth is a randomly generated rank-one matrix $\mathbb{E}(A)$ and the observed samples are $\mathbb{E}(A)+\epsilon$, where the random variable $\epsilon$ is drawn from $N(0,1)$. 
For Async-SGD, we consider two sets of step-sizes. 
The first one is chosen as $\{1\times 10^{-6}, \frac{1}{2}\times 10^{-6},\frac{1}{3}\times 10^{-6},\ldots\}$, decaying every 10 iterations, which can be viewed as $O(1/k)$. The second one is $\{1\times 10^{-6}, \frac{1}{2\log(2)}\times 10^{-6},\frac{1}{3\log(3)}\times 10^{-6},\ldots\}$, also decaying every 10 iterations. 
It satisfies the $O(1/(k^{1/2}\log(k)))$ step-size bound in Proposition 1. 
We choose the batch size $M$ as $100$. 
For Async-SGDI, we choose fixed step-size as $10^{-6}$ and increase the batch size as $\{100,400,900,\ldots\}$ every $100$ iterations. 
We run both algorithms $5000$ iterations and illustrate the convergence behaviors of these two schemes with the $\ell_2$-norm of the gradients in Figure~\ref{fig_matrix_comp}.

In Figure~\ref{fig_matrix_comp}, both Async-SGD algorithms with three different types of random gradient delay variables are convergent. 
We can see that the Async-SGDI algorithm has the fastest convergence speed. Async-SGD with $O(1/(k^{1/2}\log(k)))$ step-size is faster than that with $O(1/k)$ step-size. 
This result is consistent with our theoretical analysis.

\subsection{Maximum likelihood estimation for multivariate normal covariance matrix}\label{sec:MLE}

The second problem we experimented is the maximum likelihood estimation for the covariance matrix of a multivariate normal distribution, which can be formulated as:
\begin{align*}
\underset{\Sigma\in \mathbb{R}^{d\times d}}\min & \ln{|\Sigma|}+\frac{1}{n}\sum_{i=1}^{n}(x_i-\mu)^T\Sigma^{-1}(x_i-\mu),
\end{align*} 
where $\Sigma$ is the covariance matrix to be estimated, $\mu$ is the mean vector and $x_i$ are the samples. 
The gradient for this problem has been derived in \cite{minka2000old}.
We randomly generate data from a multivariate normal distribuion with mean $\mu^T=(0,0,0,0,0)$ and covariance matrix $$\Sigma=\left[\begin{matrix}
  12.46 & 3.99 & 5.48 & 2.71 & 2.95 \\
  3.99 & 14.99 & 4.74 & 2.42 & 4.64 \\
  5.48 & 4.74 & 12.72 & 1.68 & 2.80 \\
  2.71 & 2.42 & 1.68  &16.15 &  3.82 \\
  2.95 & 4.64 & 2.80 &  3.82 &19.38 
  \end{matrix}\right].$$

Again, we apply Async-SGD and Async-SGDI on the simulated data with three different random gradient delay models as defined in Section~\ref{sec_low_rank}: a) bounded by 50; b) Poisson(30); and c) System delay. 
For Async-SGD, we choose batch size $M$ as $100$ and consider two sets of step-sizes. The first step-size is chosen as $\{1\times 10^{-3}, \frac{1}{2}\times 10^{-3},\frac{1}{3}\times 10^{-3},\ldots\}$, decaying every 50 iterations. And the second step-size is $\{1\times 10^{-3}, \frac{1}{2\log(2)}\times 10^{-3},\frac{1}{3\log(3)}\times 10^{-3},\ldots\}$, also decaying every 50 iterations.
For Async-SGDI, we choose step-size as $0.001$ and increase the batch size as $10k^2$, with every $100k$ iterations. 
We illustrate the convergence results of these two Async-SGD schemes with the $\ell_2$-norm of gradient in Figure 3.
From Figure 3, we can observe similar results. 
The $\ell_2$-norm of the gradients are decreasing as the number of iterations increases, regardless of the choice of random delay models. 
Among the three curves, the one for Async-SGDI converges the fastest and Async-SGD with $O(1/k)$ step-size is the slowest. 
These results confirms our theoretical analysis.


\section{Conclusion} \label{sec:conclusion}

In this paper, we analyzed the convergence of two asynchronous stochastic gradient descent methods, namely Async-SGD and Async-SGDI, for non-convex optimization problems. 
By constructing a Lyapunov function that combines optimality error and asynchronicity errors, we proved a convergence rate $o(1/\sqrt{k})$ for Async-SGD and a convergence rate $o(1/k)$ for Async-SGDI, respectively.
We note that both convergence results are stronger compared to previous work.
Also, we developed a generalized and more relaxed sufficient assumption on gradient update delay for Async-SGD's convergence.
This assumption provides a unifying framework that includes the major delay models in the existing works as special cases.  
Collectively, our results advance the understanding of the convergence performance of Async-SGD for non-convex learning with unbounded gradient update delay. 
Our future work may involve pursuing non-asymptotic convergence analysis with similar weak assumptions, as well as adaptive batch-size selection strategy for Async-SGDI to increase the computation speed.

\bibliographystyle{IEEEtran}
\bibliography{reference.bib}

\appendices




\section{Proofs for Lemma \ref{lemma: delay probability} } \label{appdx_lem1}

Lemma \ref{lemma: delay probability} (i.e., the existence of a non-negative sequence $\{c_i\}_{i=1}^{\infty}$) can be proved in a constructive fashion. 
Consider the sequence $\{c_i\}_{i=1}^{\infty}$ that satisfies:
\begin{equation}\label{cc3}
c_l\ge c_{l+1} + \frac{\gamma L^2}{2}\sum_{i=l}^{\infty}ia_i \ge c_{l+1} + \frac{\gamma_kL^2}{2}\sum_{i=l}^{\infty}i\mathbb{P}(\tau_k=i),
\end{equation} 
where $\gamma\ge\max \gamma_k$ is a constant and the second inequality follows from Assumption~\ref{assump_delay}.
If such a sequence $\{c_i\}_{i=1}^{\infty}$ exists, the proof is done.
To show that $\{c_{i}\}_{i=1}^{\infty}$ exists, we only need to prove that $c_1$ is finite. 
To this end, telescope the inequality in (\ref{cc3}), we have: 
\begin{align*}
c_1&=\frac{\gamma L^2}{2}\sum_{j=1}^{\infty}\sum_{i=j}^{\infty}ia_i=\frac{\gamma L^2}{2}\sum_{i=1}^{\infty} \sum_{j=1}^{i}ia_i\\
&=\frac{\gamma L^2}{2}\sum_{i=1}^{\infty} i^2 a_i < \infty
\end{align*}

Therefore, we could generate $\{c_i\}_{i=1}^{\infty}$ by the inequalities in (\ref{cc3}).
This completes the proof.

\section{Proofs of Lemma \ref{lemma1} and Theorem \ref{Theo1}} \label{appdx_thm1}

Lemma \ref{lemma1} can be proved as follows:
Define $\mathscr{F}^k = \sigma\langle x_1,\dots,x_k; \tau_1,\dots,\tau_k \rangle$, where $x_i$ is the $i$-th iterate, $\tau_i$ is the delay in $i$-th iteration.
First, according to the updating rule and Assumptions~\ref{assump_bnded_obj}--\ref{assump_grad_exp_var}, it holds that
\begin{align} \label{eqn_step1}
&
\mathbb{E}(f(x_{k+1})-f(x_k)|\mathscr{F}^k) \notag\\
 &
\stackrel{}{\le} -\frac{\gamma_kM}{2}\|\nabla f(x_k)\|^2 +\frac{\gamma_k^2LM\sigma^2}{2} \notag\\
&+(\frac{\gamma_k^2LM}{2}-\frac{\gamma_k}{2})\sum_{m=1}^{M}\mathbb{E}(\|\nabla f(x_{k-\tau_{k,m}})\|^2|\mathscr{F}^k) \nonumber\\
&
+\frac{\gamma_kL^2}{2}\sum_{m=1}^{M}\mathbb{E}(\|x_k-x_{k-\tau_{k,m}}\|^2|\mathscr{F}^k).
\end{align}
Note that the expectation of $\|x_k-x_{k-\tau_{k,m}}\|^2$ can be bounded as:
\begin{align*}
&
\mathbb{E}(\|x_k-x_{k-\tau_{k,m}}\|^2|\mathscr{F}^k) \notag\\
&
\stackrel{}{=} \sum_{i=1}^{k}\mathbb{P}(\tau_{k,m}=i)\|\sum_{j=1}^{i}x_{k+1-j}-x_{k-j}\|^2\\
&\le \sum_{i=1}^{k}\mathbb{P}(\tau_{k,m}=i)i\sum_{j=1}^{i}\|x_{k+1-j}-x_{k-j}\|^2 \\
&=  \sum_{j=1}^{k}\sum_{i=j}^{k}\mathbb{P}(\tau_{k,m}=i)i\|x_{k+1-j}-x_{k-j}\|^2,
\end{align*}
Consider the Lyapurov function:
\begin{equation}
\zeta^k=f(x_k)-f(x^*)+\sum_{j=1}^{\infty}c_j\|x_{k+1-j}-x_{k-j}\|^2,
\end{equation}
for which we have:
\begin{align*}
&
\mathbb{E}(\zeta^{k+1}|\mathscr{F}^k) \\
%
&\le f(x_k)-f(x^*)+\sum_{j=1}^{k}(c_{j+1}+\frac{\gamma_kL^2}{2}\sum_{m=1}^{M}(
\sum_{i=j}^{k}\\&i\mathbb{P}(\tau_{k,m}=i))\|x_{k+1-j}-x_{k-j}\|^2-\frac{\gamma_kM}{2}\|\nabla f(x_k)\|^2\\& + (c_1\gamma_k^2M+\frac{L\gamma_k^2M}{2})\sigma^2+(Mc_1\gamma_k^2+\frac{ML\gamma_k^2}{2}-\frac{\gamma_k}{2})\\&\sum_{m=1}^{M}\mathbb{E}(\|\nabla f(x_{k-\tau_{k,m}})\|^2|\mathscr{F}^k).
\end{align*}
With $\gamma_k \leq 1/(2Mc_{1}+ML)$, we have:
\begin{align*}
&
\mathbb{E}(\zeta^{k+1}|\mathscr{F}^k)+\frac{\gamma_kM}{2}||\nabla f(x_k)\|^2 \notag\\
&
\le f(x_k)-f(x^*)+(c_1\gamma_k^2M+\frac{L\gamma_k^2M}{2})\sigma^2 \\
&
+
\sum_{j=1}^{k}(c_{j+1}+\frac{\gamma_kML^2}{2}\sum_{i=j}^{k}i\mathbb{P}(\tau_{k}=i))\|x_{k+1-j}-x_{k-j}\|^2
\end{align*}
Next, using Assumption 4, we have:
\begin{align*}
\mathbb{E}(\zeta^{k+1}|\mathscr{F}^k)+\frac{\gamma_kM}{2}\|\nabla &f(x_k)\|^2\notag\\
&\le \zeta^{k}+ (c_1\gamma_k^2M+\frac{L\gamma_k^2M}{2})\sigma^2.
\end{align*}
This completes the proof of Lemma \ref{lemma1}.

To finish the proof of Theorem \ref{Theo1}, we now invoke with the Lemma 1 and use the following supermartingale convergence theorem, which has been used in \cite{hannah2016unbounded,combettes2015stochastic}:
\begin{myTheo}[\cite{hannah2016unbounded,combettes2015stochastic}] \label{thm_supermartingale}
Let $\alpha^k$, $\theta^k$ and $\eta^k$ be positive sequences adapted to $\mathscr{F}^k$, and let $\eta^k$ be summable with probably 1. If $$\mathbb{E}[\alpha^{k+1}|\mathscr{F}^k]+\theta^k\le \alpha^k+\eta^k,$$
then with probability $1$, $\alpha^k$ converages to a $[0,\infty)$-valued random variable, and $\sum_{k=1}^{\infty} \theta^k<\infty$.
\end{myTheo}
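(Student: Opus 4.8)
The plan is to reduce the statement to the classical Doob supermartingale convergence theorem by constructing an auxiliary process that is a genuine, bounded-below supermartingale. First I would introduce the partial sums $E^k=\sum_{i=1}^{k-1}\eta^i$ and $T^k=\sum_{i=1}^{k-1}\theta^i$, and set
\[
W^k=\alpha^k+T^k-E^k.
\]
Taking conditional expectation and invoking the hypothesis $\mathbb{E}[\alpha^{k+1}|\mathscr{F}^k]\le \alpha^k+\eta^k-\theta^k$, a one-line computation gives $\mathbb{E}[W^{k+1}|\mathscr{F}^k]\le W^k$, so $\{W^k\}$ is a supermartingale adapted to $\{\mathscr{F}^k\}$, since $E^{k}$ and $T^{k}$ are $\mathscr{F}^{k-1}$-measurable.

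The key difficulty is that Doob's theorem requires the supermartingale to be bounded in $L^1$ (or bounded below), whereas $W^k$ carries the term $-E^k$ and $\eta^k$ is only summable \emph{almost surely}, not in expectation; hence $W^k$ need not be $L^1$-bounded. I would circumvent this by localization. For each constant $c>0$, define $\nu_c=\inf\{k:E^{k+1}>c\}$, which is a stopping time because $E^{k+1}$ is $\mathscr{F}^k$-measurable. The stopped process $\{W^{k\wedge\nu_c}\}$ is again a supermartingale, and for every $k$ the definition of $\nu_c$ forces $E^{k\wedge\nu_c}\le c$, so that $W^{k\wedge\nu_c}\ge \alpha^{k\wedge\nu_c}-c\ge -c$ using $\alpha^k\ge 0$ and $T^k\ge 0$. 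Thus $W^{k\wedge\nu_c}+c$ is a nonnegative supermartingale, and by Doob's theorem it converges almost surely to a finite limit.

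Next I would remove the localization. On the event $\{\nu_c=\infty\}=\{\sum_i\eta^i\le c\}$ the stopped and unstopped processes coincide, so $W^k$ converges a.s. there. Since $\eta^k$ is summable with probability $1$, the events $\{\sum_i\eta^i\le c\}$ exhaust the whole space as $c\to\infty$; letting $c$ run through the integers therefore shows that $W^k$ converges almost surely to a finite limit on all of $\Omega$ up to a null set. Because $E^k$ itself converges a.s. (again by summability of $\eta^k$), it follows that $\alpha^k+T^k=W^k+E^k$ converges a.s. to a finite limit.

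Finally I would extract the two conclusions. The sequence $T^k=\sum_{i<k}\theta^i$ is nondecreasing because $\theta^i\ge 0$, so it has a limit in $[0,\infty]$; were that limit $+\infty$, then $\alpha^k+T^k\to\infty$ would contradict its finite a.s. limit together with $\alpha^k\ge 0$. Hence $\sum_{k=1}^{\infty}\theta^k<\infty$ almost surely. Consequently $T^k$ converges, and $\alpha^k=(\alpha^k+T^k)-T^k$ converges a.s. to a finite limit, which nonnegativity of $\alpha^k$ forces to lie in $[0,\infty)$. The main obstacle is the localization step of the second paragraph, needed precisely because the almost-sure (rather than $L^1$) summability of $\eta^k$ rules out a direct appeal to the classical convergence theorem.
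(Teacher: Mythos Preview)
Your argument is correct and is essentially the classical Robbins--Siegmund proof via localization and Doob's nonnegative supermartingale convergence theorem. Note, however, that the paper does not supply its own proof of this statement: it is quoted verbatim as a known tool from \cite{hannah2016unbounded,combettes2015stochastic} and then applied directly, so there is no ``paper's proof'' to compare against beyond the citation itself.
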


Applying above theorem with $\alpha^k=\zeta^k$, $\theta^k=\frac{\gamma_kM}{2}\|\nabla f(x_k)\|^2$ and $\eta^k=(c_1\gamma_k^2M+\frac{L\gamma_k^2M}{2})\sigma^2$, we have $\sum_{k=1}^{\infty}\frac{\gamma_kM}{2}\|\nabla f(x_k)\|^2 < \infty$ with probability $1$. Thus $\mathbb{E} \{ \sum_{k=1}^{\infty}\frac{\gamma_kM}{2}\|\nabla f(x_k)\|^2 \}< \infty$, which implies that $\mathbb{E}\{ \|\nabla f(x_k)\|^2 \} \rightarrow 0.$
This completes the proof.

\section{Proofs for Propositions \ref{prop_async_sgd}} \label{Append: proposition2}

Firstly, we have following facts. 
\begin{enumerate}
  \item for the $p$-series $\{\frac{1}{k^p}\}_{k=1}^{\infty}$ and the $\log$-series $\{\frac{1}{k\log^p(k)}\}_{k=2}^{\infty}$, they are both unsummable if $p\le 1$, while summable if $p>1;$    
    \item $\{\frac{1}{\sqrt{k}\log(k)}\}_{k=2}^{\infty}=\Omega(1/k)$. Hence$\{\frac{1}{\sqrt{k}\log(k)}\}_{k=2}^{\infty}$ is unsummable;
    \item the log-series $\{\frac{1}{k\log(k)}\}_{k=2}^{\infty}$ ($p=1$) is unsummable, while $\{\frac{1}{k\log^2(k)}\}_{k=2}^{\infty}$ is  summable ($p=2$);
\end{enumerate}
As a result, $\gamma_k=O(\frac{1}{k^{1/2}\log(k)})$ is unsummable and $\gamma_k^2=O(\frac{1}{k\log^2(k)})$ is summable (from fact 1 and 2).  
These are condition ii) and iii) in Theorem \ref{Theo1}.
It then follows from Theorem \ref{Theo1} that $\mathbb{E}[\sum_{k=1}^{\infty}\gamma_k \|\nabla f(x_k)\|^2]< \infty.$

To show the $o(1/\sqrt{k})$ convergence rate, we note that
if $\mathbb{E}\{\|\nabla f(x_k)\|^2\}=O(1/\sqrt{k})$, then $\mathbb{E}\{\gamma_k\|\nabla f(x_k)\|^2\}=O(\frac{1}{k\log(k)})$, which is unsummable (from fact 3) and contradict to our earlier conclusion that $\mathbb{E}\{\sum_{k=1}^{\infty}\gamma_k\|\nabla f(x_k)\|^2\}< \infty$. 
Therefore, $\mathbb{E}\{\|\nabla f(x_k)\|^2\}$ must be $o(1/\sqrt{k}).$


\end{document}